\documentclass{article} % For LaTeX2e
\usepackage{iclr2023_conference,times}
\usepackage{microtype}
\usepackage{caption}
\usepackage{subcaption}
\usepackage{url}
\usepackage{algorithm}
\usepackage{algorithmic}
\usepackage{amstext}
\usepackage{amsthm,amssymb}
\usepackage{amsmath,amsfonts,bm}
\usepackage{bbm}   % for \mathbbm{1} as indicator function
\usepackage{diagbox}
\usepackage{booktabs}  % for \toprule
\usepackage{color}
\usepackage{xcolor}
\definecolor{darkblue}{RGB}{25, 50, 112}
\definecolor{linkcolor}{RGB}{74, 102, 146}
\usepackage[colorlinks=true,allcolors=linkcolor,pageanchor=true,plainpages=false, backref=true, pdfpagelabels,bookmarks,bookmarksnumbered]{hyperref}
\usepackage{float}
\usepackage{natbib}
\usepackage{lipsum}  % generate random paragraphs
\usepackage{multirow}
\usepackage{graphicx}  % for \resizebox
\usepackage{caption}
\usepackage{wrapfig}
\usepackage{tikz}
\usepackage{enumerate}
\usepackage{nicefrac}

\usepackage{thmtools,thm-restate}

% tikz
\usetikzlibrary{arrows,automata,positioning}
\newcommand\interval{0.3cm}

\tikzset{ 
roundnode/.style={circle, draw = black,
minimum size=0.9cm
},
shadownode/.style={circle, draw = black,
% draw=black!60!black, 
fill=black!10, 
minimum size=0.9cm
},
}

% if you use cleveref..
\usepackage[capitalize,noabbrev]{cleveref}  % need to be after hyperref

\newtheorem{theorem}{Theorem}[]
\newtheorem{proposition}[theorem]{Proposition}
\newtheorem*{proposition*}{Proposition}
\newtheorem{lemma}[theorem]{Lemma}

\theoremstyle{remark}

\newtheorem*{definition*}{Definition}

% \theoremstyle{remark}

% Optional math commands from https://github.com/goodfeli/dlbook_notation.
%%%%% NEW MATH DEFINITIONS %%%%%
\usepackage{amsfonts,bm}
\usepackage{amsmath}
\usepackage{amssymb}
\usepackage{mathtools}
\usepackage{amsthm}

\newcommand*\dif{\mathop{}\!\mathrm{d}}

\def\x{{\mathbf{x}}}
\def\a{{\mathbf{a}}}  %\rva below
\def\s{{\mathbf{s}}}
% \def\o{{\mathbf{o}}}

% \newcommand{\R}{\mathbb{R}}

% Mark sections of captions for referring to divisions of figures

% Highlight a newly defined term

% Figure reference, lower-case.

% Figure reference, capital. For start of sentence
\def\Figref#1{Figure~\ref{#1}}

% Section reference, lower-case.

% Section reference, capital.
\def\Secref#1{Section~\ref{#1}}
% Reference to two sections.

% Reference to three sections.

% Reference to an equation, lower-case.
\def\eqref#1{equation~\ref{#1}}
% Reference to an equation, upper case
\def\Eqref#1{Equation~\ref{#1}}
% A raw reference to an equation---avoid using if possible

% Reference to a chapter, lower-case.

% Reference to an equation, upper case.

% Reference to a range of chapters

% Reference to an algorithm, lower-case.

% Reference to an algorithm, upper case.

% Reference to a part, lower case

% Reference to a part, upper case

\def\floor#1{\lfloor #1 \rfloor}
\def\1{\bm{1}}

\def\eps{{\epsilon}}

% Random variables

\def\ra{{\textnormal{a}}}

% rm is already a command, just don't name any random variables m

% Random vectors

% Elements of random vectors

% Random matrices

% Elements of random matrices

% Vectors

% Elements of vectors

% Matrix

% Tensor
\DeclareMathAlphabet{\mathsfit}{\encodingdefault}{\sfdefault}{m}{sl}
\SetMathAlphabet{\mathsfit}{bold}{\encodingdefault}{\sfdefault}{bx}{n}

% Graph
\def\gA{{\mathcal{A}}}

\def\gD{{\mathcal{D}}}

\def\gH{{\mathcal{H}}}

\def\gJ{{\mathcal{J}}}

\def\gN{{\mathcal{N}}}
\def\gO{{\mathcal{O}}}

\def\gS{{\mathcal{S}}}

\def\gW{{\mathcal{W}}}
\def\gX{{\mathcal{X}}}

% Sets

% Don't use a set called E, because this would be the same as our symbol
% for expectation.

\def\sN{{\mathbb{N}}}

% Entries of a matrix

% entries of a tensor
% Same font as tensor, without \bm wrapper

% The true underlying data generating distribution

% The empirical distribution defined by the training set

% The model distribution

% Stochastic autoencoder distributions

 % Laplace distribution

\newcommand{\E}{\mathbb{E}}

\newcommand{\R}{\mathbb{R}}

\newcommand{\KL}{D_{\mathrm{KL}}}

% Wolfram Mathworld says $L^2$ is for function spaces and $\ell^2$ is for vectors
% But then they seem to use $L^2$ for vectors throughout the site, and so does
% wikipedia.

 % See usage in notation.tex. Chosen to match Daphne's book.

\usepackage{url}

\usepackage{xspace}
\newcommand*{\eg}{{\it e.g.}\@\xspace}
\newcommand*{\ie}{{\it i.e.}\@\xspace}
\newcommand{\algname}{SMAC\@\xspace}

\newcommand{\std}[1]{\textcolor{gray}{\scriptsize{$\pm #1$}}}
\newcommand{\cont}{\mathbf{h}}

\iclrfinalcopy

\newif\ifcomments
% Use this to turn off comments.
% \commentstrue
\commentsfalse
\ifcomments
    \newcommand{\todo}[1]{\textcolor{blue}{[TODO: #1]}}
    \newcommand{\zdh}[1]{\textcolor{brown}{[ZDH: #1]}}
    \newcommand{\ricky}[1]{\textcolor{orange}{[RC: #1]}}
    \newcommand{\az}[1]{\textcolor{red}{[AZ: #1]}}
    \newcommand{\qq}[1]{\textcolor{blue}{[QQ: #1]}}
\else
    \newcommand{\todo}[1]{}
    \newcommand{\zdh}[1]{}
    \newcommand{\ricky}[1]{}
    \newcommand{\az}[1]{}
    \newcommand{\qq}[1]{}
\fi

\newcommand{\iclrrebuttal}[1]{{#1}}

\title{Latent State Marginalization as a Low-cost Approach for Improving Exploration}

% Authors must not appear in the submitted version. They should be hidden
% as long as the \iclrfinalcopy macro remains commented out below.
% Non-anonymous submissions will be rejected without review.

\author{Dinghuai Zhang\thanks{Work done during an internship at Meta AI. Correspondence to: \textless{}\texttt{dinghuai.zhang@mila.quebec}\textgreater{}.}\;, Aaron Courville, Yoshua Bengio
\\
Mila, University de Montreal
\AND
Qinqing Zheng, Amy Zhang, Ricky T. Q. Chen \\
Meta AI (FAIR)
}

% The \author macro works with any number of authors. There are two commands
% used to separate the names and addresses of multiple authors: \And and \AND.
%
% Using \And between authors leaves it to \LaTeX{} to determine where to break
% the lines. Using \AND forces a linebreak at that point. So, if \LaTeX{}
% puts 3 of 4 authors names on the first line, and the last on the second
% line, try using \AND instead of \And before the third author name.

%\iclrfinalcopy % Uncomment for camera-ready version, but NOT for submission.
\begin{document}

\maketitle

\begin{abstract}
While the maximum entropy (MaxEnt) reinforcement learning (RL) framework---often touted for its exploration and robustness capabilities---is usually motivated from a probabilistic perspective, the use of deep probabilistic models has not gained much traction in practice due to their inherent complexity.
In this work, we propose the adoption of latent variable policies within the MaxEnt framework, which we show can provably approximate any policy distribution, and additionally, naturally emerges under the use of world models with a latent belief state. 
We discuss why latent variable policies are difficult to train, how na\"ive approaches can fail, then subsequently introduce a series of improvements centered around low-cost marginalization of the latent state, allowing us to make full use of the latent state at minimal additional cost.
We instantiate our method under the actor-critic framework, marginalizing both the actor and critic.
The resulting algorithm, referred to as \textbf{S}tochastic \textbf{M}arginal \textbf{A}ctor-\textbf{C}ritic (SMAC), is simple yet effective.
We experimentally validate our method on continuous control tasks, showing that effective marginalization can lead to better exploration and more robust training. 
Our implementation is open sourced at \small{
\url{https://github.com/zdhNarsil/Stochastic-Marginal-Actor-Critic}.
}
\end{abstract}

\section{Introduction}

\newcommand{\iconobservation}{\raisebox{-.3ex}{\includegraphics[height=1.8ex]{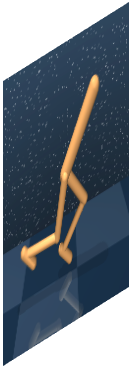}}}
\newcommand{\iconstate}{\raisebox{-.3ex}{\includegraphics[height=1.8ex]{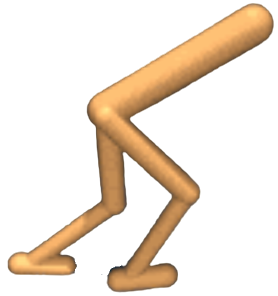}}}
\newcommand{\iconworld}{\raisebox{-.3ex}{\includegraphics[height=1.8ex]{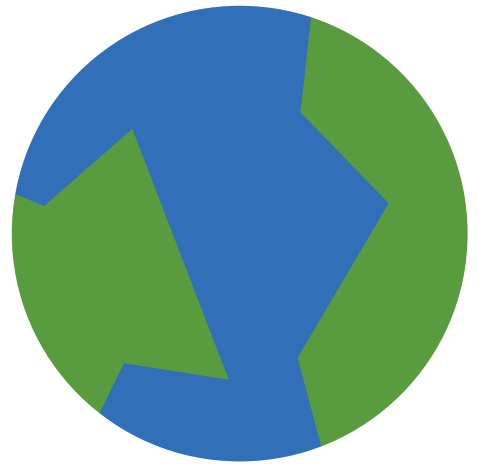}}}
\newcommand{\iconagent}{\raisebox{-.3ex}{\includegraphics[height=1.8ex]{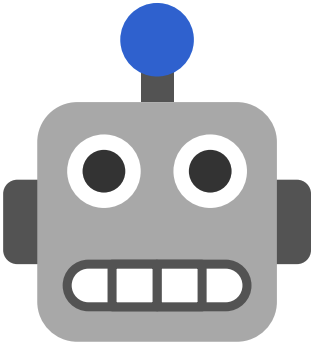}}}

\begin{wrapfigure}[]{r}{0.5\textwidth}
\vspace{-0.8cm}
\includegraphics[width=\linewidth]{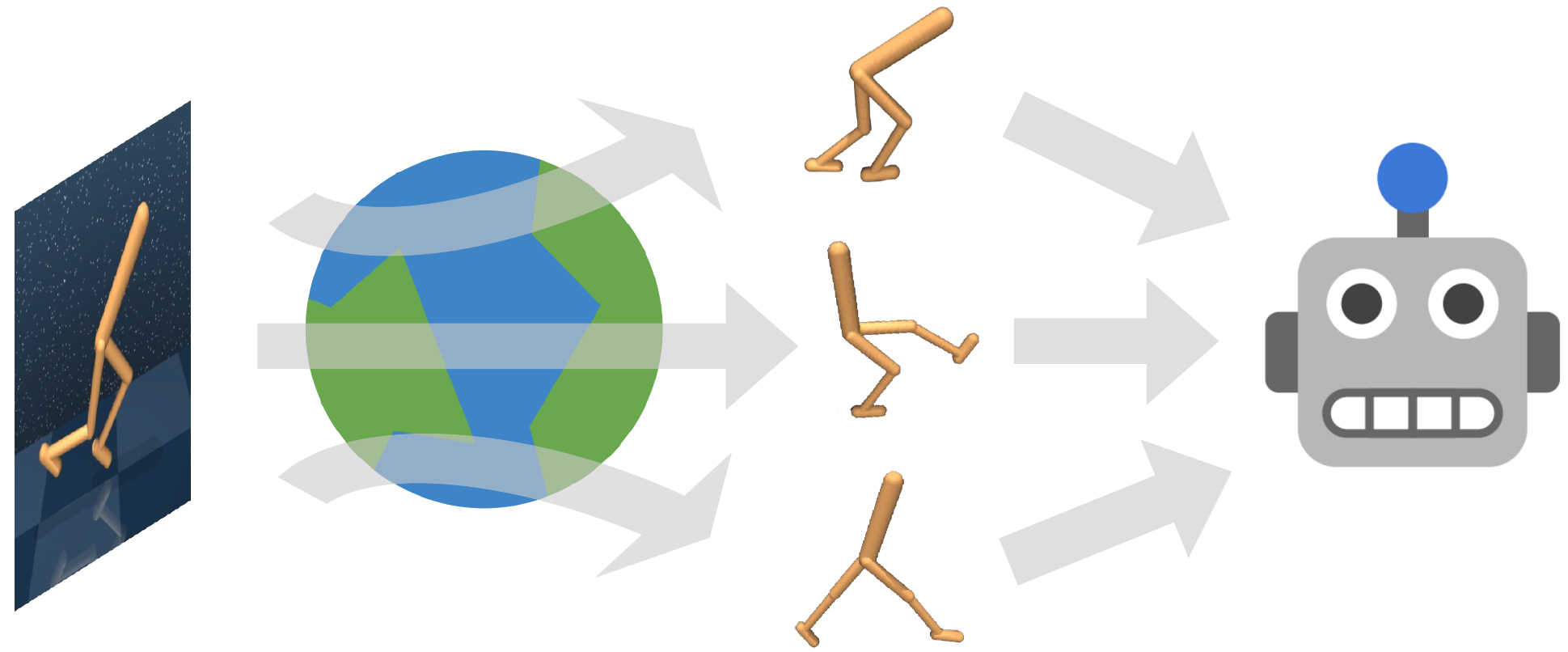}
\caption{
The world model (\iconworld) infers latent states (\iconstate) from observation inputs (\iconobservation).
While most existing methods only take one sample or the mean from this latent belief distribution,
the agent (\iconagent) of the proposed \algname algorithm marginalizes out the latent state for improving exploration.
Icons are adapted from~\citet{Mendonca2021DiscoveringAA}.
}
\vspace{-0.4cm}
\label{fig:illustration}
\end{wrapfigure}

A fundamental goal of machine learning is to develop methods capable of sequential decision making, where reinforcement learning (RL) has achieved great success in recent decades.
One of the core problems in RL is exploration, the process by which an agent learns to interact with its environment. 
% Introducing latent variables in the policy or the world model makes it possible to capture a diverse set of scenarios that are compatible with the history of observations. 
To this end, a useful paradigm is the principle of maximum entropy, which defines the optimal solution to be one with the highest amount of randomness that solves the task at hand.
While the maximum entropy (MaxEnt) RL framework~\citep{todorov2006linearly,rawlik2012stochastic} is often motivated for learning complex multi-modal\footnote{By multi-modality, we're referring to distributions with different and diverse modes.
} behaviors through a stochastic agent, algorithms that are most often used in practice rely on simple agents that only make local perturbations around a single action.
Part of this is due to the need to compute the entropy of the agent and use it as part of the training objective.

Meanwhile, the use of more expressive models have not gained nearly as much traction in the community.
While there exist works that have increased the flexibility of their agents by making use of more complex distributions such as energy-based models~\citep{haarnoja2017reinforcement}, normalizing flows~\citep{haarnoja2018latent,ward2019improving}, \iclrrebuttal{mixture-of-experts~\citep{Ren2021ProbabilisticMF}}, and autoregressive models~\citep{zhang2021autoregressive}, these constructions often result in complicate training procedures and are inefficient in practice.

Instead, we note that a relatively simple approach to increasing expressiveness is to make use of latent variables, providing the agent with its own inference procedure for modeling stochasticity in the observations, environment, and unseen rewards. 
Introducing latent variables into the policy makes it possible to capture a diverse set of scenarios that are compatible with the history of observations. 
In particular, a majority of approaches for handling partial observability make use of world models~\citep{Hafner2019LearningLD, Hafner2020DreamTC}, which already result in a latent variable policy, but existing training algorithms do not make use of the latent belief state to its fullest extent.
This is due in part to the fact that latent variable policies do not admit a simple expression for its entropy, and we show that na\"ively estimating the entropy can lead to catastrophic failures during policy optimization. 
Furthermore, high-variance stochastic updates for maximizing entropy do not immediately distinguish between local random perturbations and multi-modal exploration.
We propose remedies to these aforementioned downsides of latent variable policies, making use of recent advances in stochastic estimation and variance reduction.
When instantiated in the actor-critic framework, the result is a simple yet effective policy optimization algorithm that can perform better exploration and lead to more robust training in both fully-observed and partially-observed settings.
% Our work is not aiming for achieving state-of-the-art in RL tasks, but more about indicating the possibility of introducing structured probabilistic inference into control problems.

Our contributions can be summarized as follows:
\begin{itemize}
\item We motivate the use of latent variable policies for improving exploration and robustness to partial observations,
encompassing policies trained on world models as a special instance.
\item We discuss the difficulties in applying latent variable policies within the MaxEnt RL paradigm. We then propose several stochastic estimation methods centered around cost-efficiency and variance reduction.
\item When applied to the actor-critic framework, this yields an algorithm (SMAC; \Figref{fig:illustration}) that is simple, effective, and adds minimal costs.
\item We show through experiments that SMAC is more sample efficient and can more robustly find optimal solutions than competing actor-critic methods in both fully-observed and partially-observed continuous control tasks.
\end{itemize}

\section{Background}

\subsection{Maximum Entropy Reinforcement Learning}

We first consider a standard 
% discrete-time 
Markov decision process (MDP) setting. 
We denote states $\x_t \in \gS$ and actions $\a_t \in \gA$, for timesteps $t\in \mathbb{N}$.
There exists an initial state distribution $p(\x_1)$, a stochastic transition distribution $p(\x_t | \x_{t-1}, \a_{t-1})$, and a 
% \qq{are you assuming determinisitic reward? if yes, better state it out here.} 
deterministic reward function $r_t: \gS \times \gA \rightarrow \R$. We can then learn a policy $\pi(\a_t | \x_t)$ such that the expected sum of rewards is maximized under trajectories $\tau \triangleq (\x_1, \a_1, \dots, \x_T, \a_T)$ sampled from the policy and the transition distributions.

While it is known that the fully-observed MDP setting has at least one deterministic policy as a solution~\citep{sutton2018reinforcement,puterman1990markov}, efficiently searching for an optimal policy generally requires exploring sufficiently large part of the state space and keeping track of a frontier of current best solutions. As such, many works focus on the use of stochastic policies, often in conjunction with the maximum entropy (MaxEnt) framework,
\begin{equation}\label{eq:maxent_rl}
    \max_\pi \E_{p(\tau)} \left[ \sum_{t=0}^\infty \gamma^t\left( r_t(\x_t, \a_t) + \alpha \gH(\pi(\cdot | \x_t))\right) \right], 
    \text{where } \gH(\pi(\cdot | \x_t)) =
    \E_{\a_t \sim \pi(\cdot | \x_t)} \left[ -\log \pi(\a_t | \x_t) \right],
\end{equation}
where $p(\tau)$ is the trajectory distribution with policy $\pi$,
$\gH(\cdot)$ is entropy
and $\gamma$ is a discount factor.
% \qq{$\tau$ is not defined yet, better rewrite the expectation. Also, you have two ``where'' clauses that can be merged into one}

The MaxEnt RL objective has appeared many times in the literature~(\eg \citet{todorov2006linearly,rawlik2012stochastic,nachum2017bridging}), and is recognized for its exploration~\citep{hazan2019provably} and robustness~\citep{Eysenbach2022MaximumER} capabilities. 
It can be equivalently interpreted as variational inference from a probabilistic modeling perspective~\citep{norouzi2016reward,levine2018reinforcement,lee2020stochastic}. 
Intuitively, MaxEnt RL encourages the policy to obtain sufficiently high reward while acting as randomly as possible, capturing the largest possible set of optimal actions. Furthermore, it also optimizes policies to reach \textit{future} states where it has high entropy~\citep{haarnoja2017reinforcement}, resulting in improved exploration.

\paragraph{Soft Actor-Critic}
A popular algorithm for solving MaxEnt RL is Soft Actor-Critic (SAC; \citet{haarnoja2018soft}), which we directly build on in this work due to its reasonable good performance and relative simplicity. 
Briefly, SAC alternates between learning a soft Q-function $Q(\x_t, \a_t)$ that satisfies the soft Bellman equation,
\begin{equation}\label{eq:soft_Q}
Q(\x_t, \a_t) = r_t(\x_t, \a_t) + \gamma 
    % \mathop{\E}
    % \substack{
    % \scriptscriptstyle
    % \x_{t+1} \sim p(\x_{t+1} | \x_t, \a_t)
    % \\
    % \scriptscriptstyle
    % \a_{t+1} \sim \pi(\a_t | \x_{t+1})
    % }}
    \E_{\a_{t+1}\sim \pi(\cdot|\x_{t+1}), \x_{t+1}\sim p(\cdot|\x_t,\a_t)}
    \left[ Q(\x_{t+1}, \a_{t+1}) + \alpha \gH(\pi(\cdot | \x_{t+1})) \right]
    ,
\end{equation}
% for some discount factor $\gamma$, 
and learning a policy with the maximum entropy objective,
\begin{equation}\label{eq:maxent_rl_Q}
    \max_\pi \E_{\x_t \sim \gD} \E_{\pi(\a_t | \x_t)} \left[ Q(\x_t, \a_t) + \alpha \gH(\pi(\cdot | \x_t)) \right].
\end{equation}
where states are sampled from a replay buffer $\gD$ during training.
In practice, SAC is often restricted to the use of policies where the entropy can be computed efficiently, \eg a factorized Gaussian policy for continuous control environments.
% \qq{In practice people never exactly compute the entropy, and normally we just use the one-sample estimation; maybe tone down a bit here.} 
This allows random movements to occur as noise is added independently for each action dimension. 
Our proposed approach, on the other hand, introduces a structure in the exploration noise level.

\subsection{World Models for Partially-Observed Environments}
\label{sec:prelin_world_model}
In many practically motivated settings, the agents only have access to certain observations, \eg partial states, and the complete states must be inferred through observations. 
This can be modelled through the partially observed MDP (POMDP) graphical model, which encompasses a wide range of problem settings involving uncertainty. 
POMDP can be used to model uncertainty in the state, reward, or even the transition model itself~\citep{strm1964OptimalCO}. 
Here, the optimal policy must take into account these uncertainties, naturally becoming stochastic and may exhibit multi-modal behaviors~\citep{todorov2006linearly}.
Notationally, we only have access to observations $x_t \in \gX$ with incomplete information, while the latent state $\s_t \in \gS$ is unobserved, leading to a latent state transition distribution $p(\s_t | \s_{t-1} , \a_{t-1})$, observation distribution $p(\x_t | \s_t)$, and reward function $r_t(\s_t, \a_t)$.

In order to tackle this regime, people have resorted to learning world models \citep{Deisenroth2011PILCOAM,Ha2018WorldM} that attempt to learn a belief state conditioned on the history of observations and actions, typically viewed as performing variational inference on the POMDP, \ie the world model is then responsible for tracking a belief state $\s_t$, which is update based on new observations through an inference model $q(\s_t | \s_{t-1}, \a_{t-1}, \x_t)$. 
The POMDP and the inference model are often jointly trained by maximizing a variational bound on the likelihood of observations,
\begin{equation}
    \label{eq:world_model_obj}
    \log p(\x_{1:T} | \a_{1:T}) \geq 
    % \mathop{\E}_{q(\s_t | \a_{<t}, \x_{\leq t})} 
    \E_q
    \left[ \sum_{t=1}^T 
    \log p(\x_t | \s_t)
    - \KL(q(\s_t | \s_{t-1}, \a_{t-1}, \x_t) \Vert p(\s_t | \s_{t-1}, \a_{t-1}))
    \right].
\end{equation}
% We relegate a more detailed derivation to Appendix \ref{app:world_model_details}.
The world model is then typically paired with a policy that makes use of the belief state to take actions, \ie $\pi(\a_t | \s_t)$ with $\s_t \sim q(\s_t | \a_{<t}, \x_{\leq t})$, as the assumption is that the posterior distribution over $\s_t$ contains all the information we have so far regarding the current state.

\section{Stochastic Marginal Actor-Critic (SMAC)}
\vspace{-0.2cm}
% \az{Can you add a sentence or two here roadmapping what is coming? There are a lot of subsections and subsubsections.}

We now discuss the use of latent variables for parameterizing policy distributions, and how these appear naturally under the use of a world model. We discuss the difficulties in handling latent variable policies in reinforcement learning, and derive cost-efficient low-variance stochastic estimators for marginalizing the latent state. Finally, we put it all together in an actor-critic framework.

\vspace{-0.1cm}
\subsection{Latent variable policies}
\vspace{-0.1cm}
We advocate the use of latent variables for constructing policy distributions as an effective yet simple way of increasing flexibility. This generally adds minimal changes to existing stochastic policy algorithms.
% To unify notation across different RL settings, we consider policies conditional on some history $\cont_t$.
% A latent variable policy (LVP) then parameterizes a distribution over actions $\a_t$ as
Starting with the MDP setting, a latent variable policy (LVP) can be expressed as
\begin{equation}\label{eq:lvp_mdp}
    \pi(\a_t | \x_t) := \int \pi(\a_t | \s_t) q(\s_t | \x_t )\dif\s_t,
\end{equation}
where $\s_t$ is a latent variable conditioned on the current observation. 
In the MDP setting, the introduction of a latent $q(\s_t |\x_t)$ mainly increases the expressiveness of the policy.
% in order to take into account uncertainties regarding future.
This thus allows the policy to better capture a wider frontier of optimal actions, which can be especially helpful during the initial exploration when we lack information regarding future rewards. We discuss extensions to POMDPs shortly in the following section, where the policy is conditioned on a history of observations.

% Since the policy and Q-function depend on each other and are often trained in tandem, having a flexible policy distribution allows the Q-function to 
% \az{I wouldn't say this about SAC. SAC is an off-policy method which means it is mainly Q-learning based. The Q function update doesn't really rely on the parameterized actor or how expressive it is, it is just trying to find the optimal expected Q, which is by definition a deterministic function. The policy update is to try to fit the softmax of the Q, which can be any type of distribution.}
% \qq{I think Ricky means the following: the input distribution of Q learning is actually determined by the actor, although the updates of critic and actor are separated. By utilizing a multi-modal action distribution, you can potentially enlarge the training distribution for Q, learning, so that you can better estimate Q. This sounds plausible, but not always guaranteed IMO. You might need to run experiments to confirm this really happens in some environments.}

For parameterization, we use factorized Gaussian distributions for both $\pi(\a_t | \s_t)$ and $q(\s_t |\x_t)$. Firstly, this results in a latent variable policy that is computationally efficient: sampling and density evaluations both remain cheap. 
Furthermore, this allows us to build upon existing stochastic policy algorithms and architectures that have been used with a single Gaussian distribution, by simply adding a new stochastic node $\s_t$. 
Secondly, we can show that this is also a sufficient parameterization: with standard neural network architectures, a latent variable policy can universally approximate any distribution if given sufficient capacity.  
Intuitively, it is known that a mixture of factorized Gaussian is universal as the number of mixture components increases, and we can roughly view a latent variable model with Gaussian-distributed $\pi$ and $q$ as an infinite mixture of Gaussian distributions.
% We formalize this more rigorously in the following proposition.

\begin{restatable}{proposition}{lvmuniversal}
\label{prop:lvmuniversal}
For any $d$-dimensional continuous distribution $p^*(x)$, there exist a sequence of two-level latent variable model $p_n(x)=\int p_n(x|z)p_n(z)\dif z, n\in\sN_{+}$ that converge to it, where both $p_n(x|z)$ and $p_n(z)$ are factorized Gaussian distributions with mean and variance parameterized by neural networks. 
\end{restatable}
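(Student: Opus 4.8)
The plan is to factor the statement into two standard approximation steps and glue them with a diagonal argument. First I would show that finite mixtures of isotropic Gaussians are dense --- say, in total variation, equivalently in $L^1$ between densities --- in the class of $d$-dimensional continuous distributions. Then I would show that every finite mixture of factorized Gaussians is itself the total-variation limit of two-level latent variable models $p_n(x)=\int p_n(x\mid z)\,p_n(z)\dif z$ of the prescribed form (with, say, a one-dimensional latent $z$). Chaining the two and passing to a subsequence would produce a single sequence $p_n$ converging to the given $p^*$; weak convergence, if that is the preferred notion, follows a fortiori.

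\textbf{Step 1: Gaussian mixtures are universal.} Let $\phi_\sigma=\normal(\cdot\,;0,\sigma^2 I_d)$ be the isotropic Gaussian kernel. Since $p^*\in L^1(\R^d)$, mollification gives $\lVert p^**\phi_\sigma-p^*\rVert_{L^1}\to 0$ as $\sigma\downarrow 0$. For each fixed $\sigma$ I would approximate $(p^**\phi_\sigma)(x)=\int p^*(y)\,\phi_\sigma(x-y)\dif y$ by truncating the tails to a large cube (using $p^*\in L^1$) and forming a Riemann sum over a grid $\{y_j\}$ of mesh $\delta$, obtaining the finite mixture $\sum_j w_j\,\normal(x;y_j,\sigma^2 I_d)$ with $w_j\propto p^*(y_j)$, which converges in $L^1_x$ to $p^**\phi_\sigma$ as $\delta\downarrow 0$ with the truncation radius growing. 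A diagonal choice of $(\sigma,\delta)$ then yields finite isotropic-Gaussian mixtures converging to $p^*$ in $L^1$, hence in total variation. This is the well-known universality of Gaussian mixtures, which I would likely just cite.

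\textbf{Step 2: mixtures as latent variable models.} Fix a target mixture $q(x)=\sum_{j=1}^k w_j\,\normal(x;\mu_j,\mathrm{diag}(\sigma_j^2))$. I would take a one-dimensional latent $z$ with prior $p_n(z)=\normal(z;0,1)$ (a trivial network with no input), and let $t_0<t_1<\dots<t_k$ be the standard-normal quantiles with $\Phi(t_j)-\Phi(t_{j-1})=w_j$. Using standard sigmoidal/ReLU layers I would build a network $g_n\colon\R\to\R^d\times\R_{>0}^d$, $g_n(z)=(\mu_n(z),\sigma_n(z))$, that (i) equals the constant $(\mu_j,\sigma_j)$ on $(t_{j-1}+\tfrac1n,\,t_j-\tfrac1n)$, (ii) interpolates boundedly across the $O(1/n)$-length transition intervals, and (iii) keeps $\sigma_n(z)\ge c>0$ everywhere (e.g.\ a shifted-softplus output head, which also enforces positivity of the variance). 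Setting $p_n(x\mid z)=\normal(x;\mu_n(z),\mathrm{diag}(\sigma_n(z)^2))$, one has $p_n(x)=\E_{z\sim\normal(0,1)}\!\big[\normal(x;\mu_n(z),\mathrm{diag}(\sigma_n(z)^2))\big]$. The transition intervals carry total prior mass $O(1/n)$ while the integrand there is bounded by $(2\pi c^2)^{-d/2}$, so their contribution vanishes in $L^1$; on the complementary intervals the integral equals $\sum_j\big(\Phi(t_j-\tfrac1n)-\Phi(t_{j-1}+\tfrac1n)\big)\,\normal(x;\mu_j,\mathrm{diag}(\sigma_j^2))$, whose weights tend to $w_j$. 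Hence $p_n\to q$ in total variation.

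\textbf{Main obstacle.} The only nonroutine ingredient is the construction in Step 2: realizing an (approximately) piecewise-constant map $z\mapsto(\mu_n(z),\sigma_n(z))$ with a \emph{standard} architecture while simultaneously (a) driving the measure of the transition region to zero, (b) keeping $\mu_n$ bounded and $\sigma_n$ bounded away from $0$ --- which is precisely what makes the transition-region contribution uniformly controllable --- and (c) keeping the output variance strictly positive. Once those properties are in hand, everything else (mollification, the Riemann-sum discretization, the two diagonal passes) is bookkeeping. A small point to pin down at the outset is the convergence mode: total variation is the strongest natural choice and is exactly what the above estimates deliver, with weak convergence as an immediate corollary.
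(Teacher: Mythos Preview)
Your proposal is correct and takes a genuinely different route from the paper. The paper's proof works in the Wasserstein metric: it fixes the prior $p_n(z)=\gN(0,1)$, invokes a pushforward universality result (Perekrestenko et al., 2020) to obtain a finite neural network $\Psi_n:\R\to\R^d$ with $\gW[\Psi_n\#\gN(0,1)\,\Vert\,p^*]\le 1/n$, and then sets $p_n(x\mid z)=\gN(x;\Psi_n(z),n^{-2})$, bounding the additional noise in Wasserstein by a direct coupling and finishing with the triangle inequality. Your argument instead goes through total variation: first approximate $p^*$ by finite isotropic Gaussian mixtures (mollification plus discretization), then realize each mixture as a latent variable model via an approximately piecewise-constant decoder on quantile intervals of a one-dimensional Gaussian latent. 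Both constructions share the same architectural skeleton (scalar latent, standard normal prior, neural-network mean for the conditional), but the paper outsources the hard approximation step to a single cited theorem and lets the conditional variance shrink to zero, whereas you keep the conditional variance bounded below for each fixed target mixture and do the approximation by hand. Your version is more self-contained and delivers a stronger mode of convergence on absolutely continuous targets; the paper's is shorter and, because Wasserstein tolerates singular limits, nominally applies to a slightly broader reading of ``continuous distribution.'' One small point to tidy in your Step~1: the weights should be cell integrals of $p^*$ rather than point values $p^*(y_j)$, since $p^*$ is only assumed to be in $L^1$; this is harmless once you cite the standard density result as you indicate.
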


Proof can be found in Appendix \ref{sec:proof_universality}.

% Discuss how this latent variable is a bit arbitrary and only used to increase flexibility in environments where it may be useful.

% Also important to discuss how our latent variable policies contain standard Gaussian as a special case. The extra flexibility is *only* used when it is necessary to keep track of a large set of optimal actions.

% These then lead into the next section, where latent variables are already present in existing algorithms but are not used to their full capacity.

% Should we discuss linear noise (Gaussian) vs nonlinear noise (latent variables) on the action space?

\subsubsection{World models induce latent variable policies}

Perhaps unsurprisingly, latent variables already exist as part of many reinforcement learning works. 
In particular, in the construction of probabilistic world models, used when the environment is highly complex or only partially observable. Some works only use intermediate components of the world model as a deterministic input to their policy distribution~(\eg \citet{lee2020stochastic}), disregarding the distributional aspect, while other approaches use iterative methods for producing an action~(\eg \citet{Hafner2020DreamTC}). 
We instead simply view the world model for what it is---a latent state inference model---which naturally induces a latent variable policy,
\vspace{-0.1cm}
\begin{equation}\label{eq:lvp_pomdp}
    \pi(\a_t|\a_{<t},\x_{\le t}) = \int \pi(\a_t|\s_t) q(\s_t| \a_{<t}, \x_{\le t}) \dif \s_t.
\end{equation}
\vspace{-0.1cm}
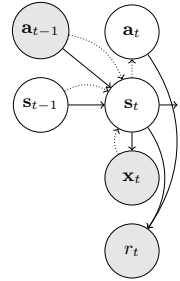
\begin{wrapfigure}[15]{c}{0.21\textwidth}
\vspace{-1em}
\centering
\scalebox{0.8}{
\begin{tikzpicture}[]
\small{
% Nodes
\node[roundnode] (s_t_1) {${\s_{t-1}}$};
\node[roundnode] (s_t) [right = 2*\interval of s_t_1] {${\s_t}$};
\node[roundnode] (a_t) [above =\interval of s_t]{$\a_t$};
\node[shadownode] (x_t) [below =\interval of s_t]{$\x_t$};
\node[shadownode] (a_t_1) [above =\interval of s_t_1]{$\a_{t-1}$};
\node[shadownode] (r_t) [below =\interval of x_t]{$r_t$};
}
% undrawn nodes
% \node[state,draw=none] [above =\interval of s_t_1] (a_t_1) {};
\node[state,draw=none] [right =\interval of s_t] (s_t_plus) {};
% Lines
\draw[->][] (s_t_1) to (s_t);
\draw[->][] (s_t) -- (x_t);
\draw[->] (s_t) to[bend left=35] (r_t);
\draw[->] (a_t) to[bend left=35] (r_t);
\draw[->] (a_t_1) to (s_t);
\draw[->] (s_t) to (s_t_plus);
% inference lines
\draw[densely dotted, ->] (s_t) to (a_t);
\draw[densely dotted, ->] (x_t) to[bend left=30] (s_t);
\draw[densely dotted, ->] (s_t_1) to[bend left=30] (s_t);
\draw[densely dotted, ->] (a_t_1) to[bend left=30] (s_t);
\end{tikzpicture}
}
\caption{\small Graphical model of POMDP (\textit{solid}), world model, and induced latent variable policy (\textit{dashed}).}
\vspace{-.6cm}
\label{fig:graphical_model}
\end{wrapfigure}
This follows the form of \Eqref{eq:lvp_mdp} where the context includes the entire history, \ie $\cont_t = (\a_{<t}, \x_{\le t})$.
Note that $\pi(\a_t | \s_t)$ conditions only on the current latent state due to a Markov assumption typically used in existing world models (see Figure \ref{fig:graphical_model}), though our algorithms easily extend to non-Markov settings as well.
Furthermore, this marginalizes over the full latent history due to the recurrence,
\begin{equation}
    q(\s_t| \a_{<t}, \x_{\le t}) = \int q(\s_t|\s_{t-1}, \a_{t-1}, \x_t) q(\s_{t-1}|\a_{<t-1}, \x_{\le t-1})\dif \s_{t-1},
\end{equation}
which when recursively applied, we can see that the belief state $\s_t$---and hence the policy---marginalizes over the entire history of belief states. 
A more thorough discussion is in Appendix \ref{app:world_model_details}.

Our approaches for handling latent variables are agnostic to what $q$ conditions on, so to unify and simplify notation, we use the shorthand
$\cont_t \triangleq (\a_{<t}, \x_{\leq t})$ to denote history information. This subsumes the MDP setting, where $q(\s_t | \cont_t)$ is equivalent to $q(\s_t | \x_t)$ due to Markovian conditional independence.

\subsection{MaxEnt RL in the presence of latent variables}
\label{sec:maxenrl_with_latents}

% We now discuss how we can train both the latent variable policy and the critic for the MaxEnt RL setting.
% While MaxEnt RL encourages a degree of stochasticity and multi-modality both during exploration and in terms of the optimal solution, t

The presence of latent variables makes training with the maximum entropy objective (equations \ref{eq:maxent_rl} and \ref{eq:maxent_rl_Q}) difficult. 
Firstly, it requires an accurate estimation of the entropy term, and the entropy of a latent variable model is notoriously hard to estimate due to the intractability of marginalization~\citep{paninski2003estimation,lim20a}.
%
% Secondly, sampling-based training often results in missing modes, and local updates with maximum entropy can still end up not making full use of the latent variable~\citep{rainforth2018inference,midgley2022flow}, so we discuss modifications to the objective to encourage multi-modal sampling behaviors.
%
Secondly, the use of latent variables results in an increase in gradient variance, which we remedy with variance reduction methods at a negligible cost.
Finally, the appearance of latent variables can also be used within the $Q$-function to better aggregate uncertainty. For each, we derive principled methods for handling latent variables, while the end result is actually fairly simple and only adds a minimal amount of extra cost compared to non-latent variable policies.

\subsubsection{Estimating the marginal entropy}
\label{sec:estimate_entropy}

An immediate consequence of using latent variables is that the entropy, or marginal entropy, becomes intractable, due to the log-probability being intractable, \ie
\begin{equation}\label{eq:lvp_ent}
    \gH(\pi(\cdot | \cont_t)) = \E_{\pi(\a_t | \cont_t)} \left[ 
    - \log \int \pi(\a_t | \s_t) q(\s_t | \cont_t) \dif \s_t
    \right].
\end{equation}
%We first discuss how most of the standard estimation methods for latent variable models do not readily apply to the MaxEnt RL setting.

\paragraph{Failure cases of na\"ive entropy estimation} Applying methods developed for \iclrrebuttal{amortized} variational inference~\citep{kingma2013auto,Burda2016ImportanceWA} can result in a bound on the entropy that is in the wrong direction. For instance, the standard evidence lower bound (ELBO) results in an entropy estimator,
\begin{equation}
    \widetilde{\gH}_{\text{na\"ive}}(\cont_t) \triangleq \E_{\pi(\a_t | \cont_t)} \E_{r(\s_t | \a_t, \cont_t)}\left[ -\log \pi(\a_t | \s_t) + \log \tilde{q}(\s_t|\a_t, \cont_t) -\log  q(\s_t|\cont_t) \right],
\end{equation}
where $\tilde{q}$ is any variational distribution, for example setting $\tilde{q}(\s_t|\a_t, \cont_t) = q(\s_t|\cont_t)$.
Adopting this na\"ive estimator will result in maximizing an \textit{upper} bound on the MaxEnt RL objective, which we can see by writing out the error, 
\begin{equation}\label{eq:elbo_entropy}
    \widetilde{\gH}_{\text{na\"ive}}(\cont_t) = \gH(\pi(\cdot | \cont_t)) + \E_{\pi(\a_t | \cont_t)} \left[ \KL(\tilde{q}(\s_t|\a_t, \cont_t) \Vert q(\s_t|\a_t, \cont_t)) \right].
\end{equation}

\begin{wrapfigure}[13]{r}{0.3\textwidth}
\vspace{-2.0em}
\includegraphics[width=\linewidth]{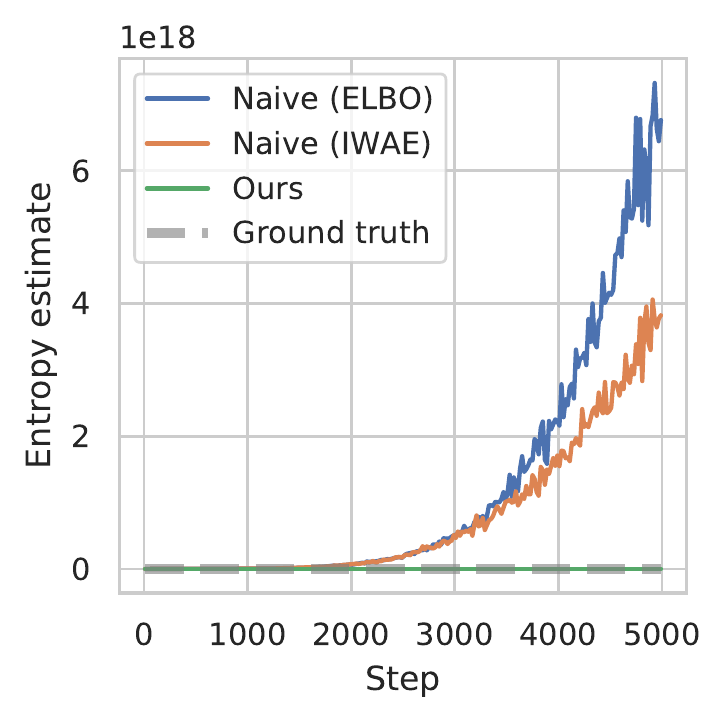}
\vspace{-2.2em}
\caption{Training with na\"ive entropy estimators results in extremely loose upper bounds.}
% \vspace{5em}
\label{fig:entropy_estimation}
\end{wrapfigure}

Therefore, replacing the entropy in the MaxEnt RL objective (\Eqref{eq:maxent_rl}) with $\widetilde{\gH}_{\text{na\"ive}}$ will lead to maximizing the error---\ie the KL divergence---incentivizing the variational distribution to be as \emph{far} as it can from the true posterior $q(\s_t|\a_t,\cont_t)$.
Furthermore, this error is unbounded so it may become arbitrarily large without actually affecting the true entropy we want to be maximizing, $\gH(\pi(\cdot | \cont_t))$, which leads to serious numerical instability issues. 
In \Figref{fig:entropy_estimation}, we show the results from a preliminary experiment where this approach to entropy estimation during policy optimization led to extremely large values (scale of $10^{18}$), significantly overestimating the true entropy, and resulted in policies that did not learn. 
More details are in Appendix \ref{app:experiments}.
To overcome this overestimation issue, we propose the following method for achieving accurate estimation.
% We do not show our method in \Figref{fig:entropy_estimation}, as it overlaps with ``ground truth" due to the large scale of $y$-axis.

\paragraph{Lower bounding the marginal entropy with nested estimator}
% In order t
To be amenable to entropy maximization, we must construct a lower bound estimator of the marginal entropy. 
For this, inspired by advances in hierarchical inference~\citep{yin2018semi,Sobolev2019ImportanceWH}, a method to estimate the marginal entropy (\Eqref{eq:lvp_ent}) via a \textit{lower} bound can be obtained. Specifically, for any $K \in \mathbb{N}$, we define
% \vspace{-0.1cm}
\begin{equation}\label{eq:multisample_entropy}
    \widetilde{\gH}_{K}(\cont_t) \triangleq 
    \E_{\a_t \sim \pi \left(\a_t | \cont_t \right)}
    \E_{\s_t^{(0)}\sim \iclrrebuttal{p}(\s_t | \a_t, \cont_t)}
    \E_{\s_t^{(1:K)}\sim q(\s_t | \cont_t)}
    \left[
    -\log\left(\frac{1}{K+1}\sum_{k=0}^K \pi\left( \a_t | \s_t^{(k)}\right)\right)
    \right].
\end{equation}
where \iclrrebuttal{$p$}$(\s_t | \a_t, \cont_t)$ is the (unknown) posterior of the policy distribution; 
however, we can easily sample from this by first sampling $\s_t^{(0)}$ then sample $\a_t$ conditioned on $\s_t^{(0)}$.
This results in a nested estimator where we effectively sample $K + 1$ times from $q(\s_t | \cont_t)$, use only the first latent variable $s_t^{(0)}$ for sampling the action, while using all the latent variables to estimate the marginal entropy. 
Note that this is \emph{not} equivalent to replacing the expectation inside the logarithm with independent samples, which would correspond to an IWAE estimator~\citep{Burda2016ImportanceWA}.
\Eqref{eq:multisample_entropy} results in a nested estimator that is monotonically increasing in $K$, which in the limit, becomes an unbiased estimator of the marginal entropy, \ie $\widetilde{\gH}_K(\cont_t) \le \gH\left[\pi(\cdot |\cont_t)\right]$, $\widetilde{\gH}_K(\cont_t) \le \widetilde{\gH}_{K+1}(\cont_t)$ and $\lim_{K\to \infty}\widetilde{\gH}_K(\cont_t) = \gH\left[\pi(\cdot | \cont_t)\right]$. 
Thus, replacing the marginal entropy with $\gH_K$ results in maximizing a tight lower bound on the MaxEnt RL objective, and is much more numerically stable in practice.
Proofs for these results are in Appendix~\ref{sec:theory}.
In practice, we find that using reasonable values for $K$ does not increase computation times since sampling multiple times is easily done in parallel, and the evaluation of $\pi(\a_t | \s_t)$ is cheap relative to other components such as the world model.

\subsubsection{Variance reduction with antithetic multi-level Monte Carlo}
\label{sec:mlmc}

While latent variable policies can optimize for the MaxEnt RL objective better in expectation, its reliance on stochastic estimation techniques introduces additional gradient variance. This higher variance actually results in poorer sample efficiency, negating any gains obtained from using a more flexible distribution. In particular, it has been shown that multi-sample estimators like \Eqref{eq:multisample_entropy} can result in more noise than signal as $K$ increases~\citep{rainforth2018tighter}.
To remedy this, we adopt a simple yet reliable variance reduction method referred to as antithetic multi-level Monte Carlo (MLMC). While this method has been used in simulations of stochastic differential equations~\citep{giles2008multilevel,giles2014antithetic} and more recently, in variational inference~\citep{ishikawa21a,shi21d}, it has not yet seen uses in the context of reinforcement learning.

Applying MLMC to the estimator in \Eqref{eq:multisample_entropy}, we have
\begin{equation}\label{eq:mlmc_terms}
\widetilde{\gH}_{K}^\text{MLMC} = \sum_{\ell=0}^{\floor{\log_2(K)}}\Delta \widetilde{\gH}_{2^\ell},\quad
\text{where }\;
\Delta \widetilde{\gH}_{2^\ell} =
\begin{cases}
    \widetilde{\gH}_{1} & \text{if $\ell = 0$,}\\
    \widetilde{\gH}_{2^\ell} - \frac{1}{2} \left( \widetilde{\gH}_{2^{\ell - 1}}^{(a)} + \widetilde{\gH}_{2^{\ell - 1}}^{(b)} \right) & \text{otherwise}.
\end{cases}
\end{equation}
At the $\ell$-th level, after we have generated $2^\ell$ i.i.d. samples, we use half to compute $\widetilde{\gH}_{2^{\ell - 1}}^{(a)}$, the other half to compute $\widetilde{\gH}_{2^{\ell - 1}}^{(b)}$, and all of the samples to compute $\widetilde{\gH}_{2^\ell}$.
This antithetic sampling scheme is a key ingredient in reducing variance, and can achieve the optimal computational complexity for a given accuracy~\citep{ishikawa21a}.
We compute all the $\Delta \widetilde{\gH}_{2^\ell}$ terms in parallel in our implementation, 
%and the main computational bottleneck for computing $\widetilde{\gH}_{K}^\text{MLMC}$ is still computing $\pi(\a_t | \s_t)$, 
so there is an almost negligible additional cost compared to $\widetilde{\gH}_{K}$. 
The only consideration involved for using $\widetilde{\gH}_{K}^\text{MLMC}$ is that $K$ should be a power of two.

\subsubsection{Estimating the marginal Q-function}

% The Q-function (\Eqref{eq:soft_Q}) is the reward-to-go under the current policy. 
Under the POMDP setting, we aim to build a Q-function upon the inferred belief states $\s_t$ as these contain the relevant dynamics and reward information.
However, while most existing approaches such as \citet{lee2020stochastic,Hafner2020DreamTC} only take one sample of the inferred latents as input of the Q-function, we propose marginalizing out the latent distribution in the critic calculation.
This can be seen by interpreting the Q-function through the probabilistic inference framework in \citet{levine2018reinforcement}, where the reward function is viewed as the log-likelihood of observing a binary optimality random variable $\gO$, \ie satisfying $p(\gO_t=1|\s_t, \a_t) \propto \exp(r(\s_t, \a_t))$. 
As a result, the Q-function is equivalent to $Q(\s_t, \a_t) = \log p(\gO_{t:T}|\s_t, \a_t)$. 
Since our latent belief state represents the uncertainty regarding the system in the context of POMDPs, including the current state and unseen rewards, we propose marginalizing the value function over the belief state.
Hence, through this probabilistic interpretation, the marginal Q-function is related to the Q-function over latent states through
% \begin{equation}\label{eq:marginal_Q}
%     Q(\x_{\leq t}, \a_{\leq t}) 
%     = \log \int p(\gO_{t:T}|\s_t, \a_t) q(\s_t | \a_{<t}, \x_{\leq t}) \dif \s_t
%     = \log \int \exp{\{Q(\s_t, \a_t)\}} q(\s_t | \a_{<t}, \x_{\leq t}) \dif \s_t.
% \end{equation}
\begin{equation}\label{eq:marginal_Q}
    Q(\cont_t, \a_t) 
    = \log \int p(\gO_{t:T}|\s_t, \a_t) q(\s_t | \cont_t) \dif \s_t
    = \log \int \exp{\{Q(\s_t, \a_t)\}} q(\s_t | \cont_t) \dif \s_t.
\end{equation}
Given this, we hence propose the following estimator to be used during policy optimization,
\begin{align}
\label{eq:logsumexp_Q}
    Q(\cont_t, \a_t) \approx \widetilde{Q}_K(\cont_t,\a_{t}) \triangleq \log\left(\frac{1}{K}\sum_{k=1}^K\exp{\left\{ Q(\s_t^{(k)},\a_t) \right\}}\right),\quad \s_t^{(1:K)} \sim q(\s_t | \cont_t),
\end{align}
where $Q(\s_t, \a_t)$ is trained to satisfy the soft Bellman equation in \Eqref{eq:soft_Q}.
A closely related approach is from \citet{lee2020stochastic} who similarly trains a Q-function on latent states; however, they directly use $Q(\s_t, \a_t)$ during policy optimization, which is a special case with $K=1$, whereas using $K > 1$ results in a closer approximation to the marginal Q-function.
We found this construction for marginalizing the Q-function to be useful mainly when used in conjunction with a world model.

% According to the control as inference framework~\citep{levine2018reinforcement}, Q function is the approximate message satisfying the soft Bellman backup equation and could be written as the posterior of optimality binary random variable $\gO$\footnote{Here the optimality binary variable is defined by $p(\gO_t=1|\s_t, \a_t) \propto \exp(r(\s_t, \a_t))$.} in the sense of $Q(\s_t, \a_t) = \log p(\gO_{t:T}|\s_t, \a_t)$.
% Since we do the policy optimization given the observation trajectory $\x_{\le T}$, we propose to put the Q function in the observation space and aggregate the latent messages under the control as inference framework:
% \begin{align*}
% Q(\x_{\le t},\a_{\le t}) = \log p(\gO_{t:T}|\x_{\le t},\a_{\le t}) = \log \int p(\gO_{t:T}|\s_t,\a_t) q(\s_t|\s_{<t},\a_{<t},\x_{\le t})\dif \s_t \\
% = \log \int \exp\left(Q(\s_t,\a_t)\right) q(\s_t|\s_{<t},\a_{<t},\x_{\le t})\dif \s_t 
% = \log\E_{\s_t\sim q(\cdot|\s_{<t},\a_{<t},\x_{\le t})}\left[\exp \left(Q(\s_t,\a_t)\right)\right].
% \end{align*}
% We simplify the event of $\{\gO = 1\}$ to $\gO$ in the derivation.
% The observation space Q function could be further approximated with $K$ particles and logsumexp calculation
% \begin{align*}
% Q(\x_{\le t},\a_{\le t})\approx \mathrm{logsumexp}_{k=1}^K\left(Q(\s_t^k,\a_t)\right) - \log K
% \end{align*}
% where $\s_t^k\sim q(\cdot|\s_{<t},\a_{<t},\x_{\le t}), k=1,\ldots, K$.

% This is different from the previous usage of ensembles in Q-learning such as~\citet{Lee2021SUNRISEAS}.

% \zdh{we can mention the design of SLAC's actor input to motivate ours}

\subsection{Stochastic Marginal Actor-Critic (SMAC)}

While the above methods can each be applied to general MaxEnt RL algorithms, we instantiate a concrete algorithm termed Stochastic Marginal Actor-Critic (SMAC). 
SMAC is characterized by the use of a latent variable policy and maximizes a lower bound to a marginal MaxEnt RL objective. 
Specifically, we use the same method as SAC to train $Q(\s_t, \a_t)$ on latent states, but we train the policy using a low-variance debiased objective for taking into account latent state marginalization,
\begin{align}\label{eq:smac_policy_obj}
    \max_{\pi}
    \E_{\cont_t\sim \gD, \a_t \sim \pi} \left[ \widetilde{Q}_K(\cont_t,\a_t) + \alpha \widetilde{\gH}^\text{MLMC}_K(\cont_t)
    \right].
\end{align}
We train the inference model $q(\s_t | \cont_t)$ with standard \iclrrebuttal{amortized} variational inference (\Eqref{eq:world_model_obj}), and we train only $\pi(\a_t | \s_t)$ using the objective in \Eqref{eq:smac_policy_obj}. 
When not used with a world model, we train both $\pi(\a_t | \s_t)$ and $q(\s_t | \cont_t)$ using \Eqref{eq:smac_policy_obj}.
See Algorithms~\ref{alg:smac_model_free} and~\ref{alg:smac_with_world_model} for a summary of the training procedures, and more details regarding implementation for SMAC in Appendix \ref{sec:additional_methodology}. 

\section{Related Work}

\paragraph{Maximum entropy reinforcement learning}
Prior works have demonstrated multiple benefits of MaxEnt RL, including improved exploration~\citep{Han2021AME}, regularized behaviors~\citep{Neu2017AUV,Vieillard2020LeverageTA}, better optimization property~\citep{Ahmed2018UnderstandingTI}, and stronger robustness~\citep{Eysenbach2022MaximumER}.
Generally, policies optimizing the MaxEnt RL objective sample actions that are proportional to the exponentiated reward, and alternatively can be viewed as a noise injection procedure for better exploration~\citep{Attias2003PlanningBP, Ziebart2010ModelingPA,haarnoja2017reinforcement,nachum2017bridging,levine2018reinforcement,Abdolmaleki2018MaximumAP,haarnoja2018soft,Vieillard2020MunchausenRL,Pan2022GenerativeAF,Pan2023BetterTO,Lahlou2023ATO}. However, this noise injection is commonly done directly in action space, leading to only local perturbations, whereas we inject noise through a nonlinear mapping.

\vspace{-0.2cm}
\paragraph{Latent variable modeling}
The usage of latent variable models originates from graphical models~\citep{Dayan1995TheHM,Hinton2006AFL} and has been recently popularized in generative modeling~\citep{kingma2013auto,JimenezRezende2014StochasticBA,Zhang2021UnifyingLI,Zhang2022GenerativeFN,Zhang2022UnifyingGM}.
The estimation of the log marginal probability and marginal entropy has long been a central problem in Bayesian statistics and variational inference~\citep{Newton1994ApproximateBW,Murray2008EvaluatingPU,Nowozin2018DebiasingEA,ishikawa21a,Malkin2022GFlowNetsAV}.
However, most of these works consider a lower bound on the log marginal probability for variational inference, which is not directly applicable to maximum entropy as discussed in \Secref{sec:estimate_entropy}. A few works have proposed upper bounds~\citep{Sobolev2019ImportanceWH,dieng2017variational} or even unbiased estimators~\citep{Luo2020SUMOUE}, and while we initially experimented with a couple of these estimators, we found that many results in high gradient variance and ultimately identified an approach based on hierarchical inference technique
% \citep{Sobolev2019ImportanceWH} 
for its efficiency and suitability in RL.

\vspace{-0.2cm}
\paragraph{Latent structures in POMDPs and world models}
Settings with only partial observations are natural applications for probabilistic inference methods, which help learn latent belief states from observational data.
As such, variational inference has been adopted for learning sequential latent variable models~\citep{Ghahramani2000VariationalLF,Krishnan2015DeepKF,Fraccaro2016SequentialNM,Karl2017DeepVB,Singh2021StructuredWB}. 
One paradigm is to use the learned recurrent model to help model-free RL algorithms~\citep{Wahlstrom2015FromPT,Tschiatschek2018VariationalIF,Buesing2018LearningAQ,Igl2018DeepVR,Gregor2019ShapingBS,Han2020VariationalRM}.
Another approach is to use world models for solving POMDP and building model-based RL agents
~\citep{Deisenroth2011PILCOAM,Hausknecht2015DeepRQ,Watter2015EmbedTC, Zhang2019SOLARDS,Hafner2019LearningLD, Hafner2020DreamTC,Hafner2021MasteringAW,Nguyen2021TemporalPC,Chen2022FlowbasedRB} due to their planning capabilities.
It is also sometimes the case that the world model is treated mainly as a representation, without much regard for the measure of uncertainty~\citep{Ha2018WorldM,Schrittwieser2020MasteringAG,amos2021model,Hansen2022TemporalDL}.

\vspace{-0.2cm}
\section{Experiments}
\label{sec:experiments}
\vspace{-0.1cm}

We evaluate \algname on a series of diverse continuous control tasks from DeepMind Control Suite (DMC; \citet{Tassa2018DeepMindCS}).
These tasks include challenging cases in the sense of having sparse rewards, high dimensional action space, or pixel observations. 
We also perform a preliminary unit test with a multi-modal reward in Appendix \ref{sec:multi-modality}.
% We also perform a unit test with a multi-modal reward in Appendix \ref{sec:multi-modality}.
% All algorithms are run for $1000$ episodes each, and results are aggregated across 5 random seeds.

\vspace{-0.2cm}
\subsection{State-based continuous control environments}
\label{sec:state-experiments}
\vspace{-0.1cm}

\begin{figure}[t]
    \centering
    \includegraphics[width=\textwidth]{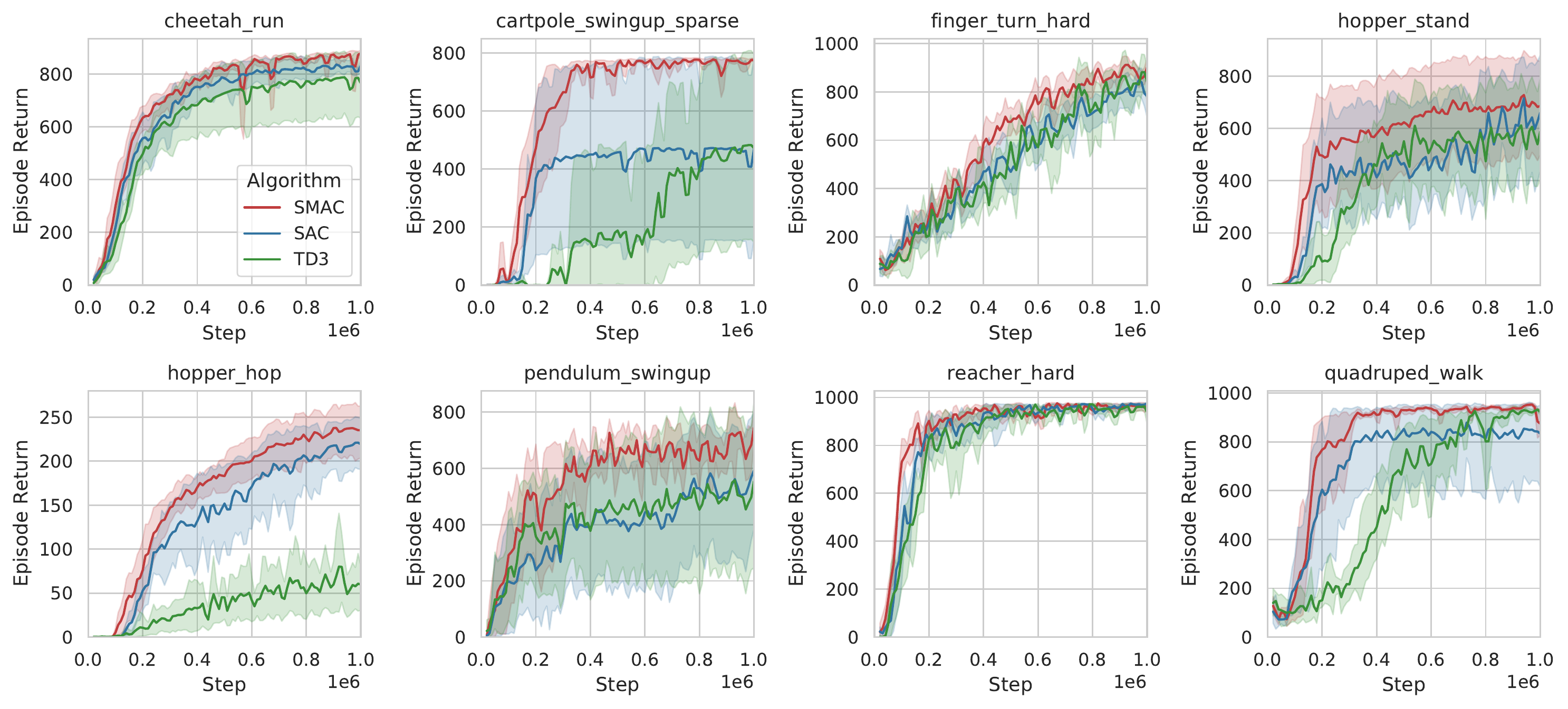}
    \caption{
    Experiments on eight DMC environments where agents are given state-based inputs.
    The \algname approach improves upon SAC with better exploration and more robust training.
    }
    \label{fig:state_experiment}
\end{figure}

\paragraph{Setting} We first compare \algname with SAC and TD3~\citep{Fujimoto2018AddressingFA} baselines on a variety of state-based environments to demonstrate the advantage of latent variable policies.
We show eight environments in \Figref{fig:state_experiment}, and leave  more  results in Appendix due to space limitations. 
\vspace{-0.2cm}

\paragraph{Results} 
We find that even in the simple MDP setting, we can improve upon SAC by simply introducing a latent variable. 
Specifically, our method is almost never worse than SAC, implying
that the extra gradient variance from the entropy estimation is not incurring a penalty in terms of sample efficiency. 
By being able to track a wider frontier of optimal action trajectories, SMAC can be more robust to find optimal policies, particularly when the reward is sparse (\eg, \texttt{cartpole\_swingup\_sparse}). 
% Overall, while our best policy across different seeds generally achieves the same final reward as the best SAC policy, 
% SMAC can more reliably finds the optimal policy.

\paragraph{Comparison with other probabilistic modeling approaches}
\iclrrebuttal{We further conduct extensive empirical comparisons with other probabilistic policy modeling methods including normalizing flow and mixture-of-experts~\citep{Ren2021ProbabilisticMF} based SAC methods in Figure~\ref{fig:full_state_experiment_ablation}.  Our proposed \algname generally achieves the best sample efficiency on almost all environments. Due to limited space, we defer related discussion to Appendix~\ref{app:experiments}. 
}

\begin{wrapfigure}[]{r}{0.48\textwidth}
\vspace{-1.5em}
\begin{subfigure}[b]{0.48\linewidth}
\centering
\includegraphics[width=0.96\linewidth, trim= 0 0 210px 20px, clip]{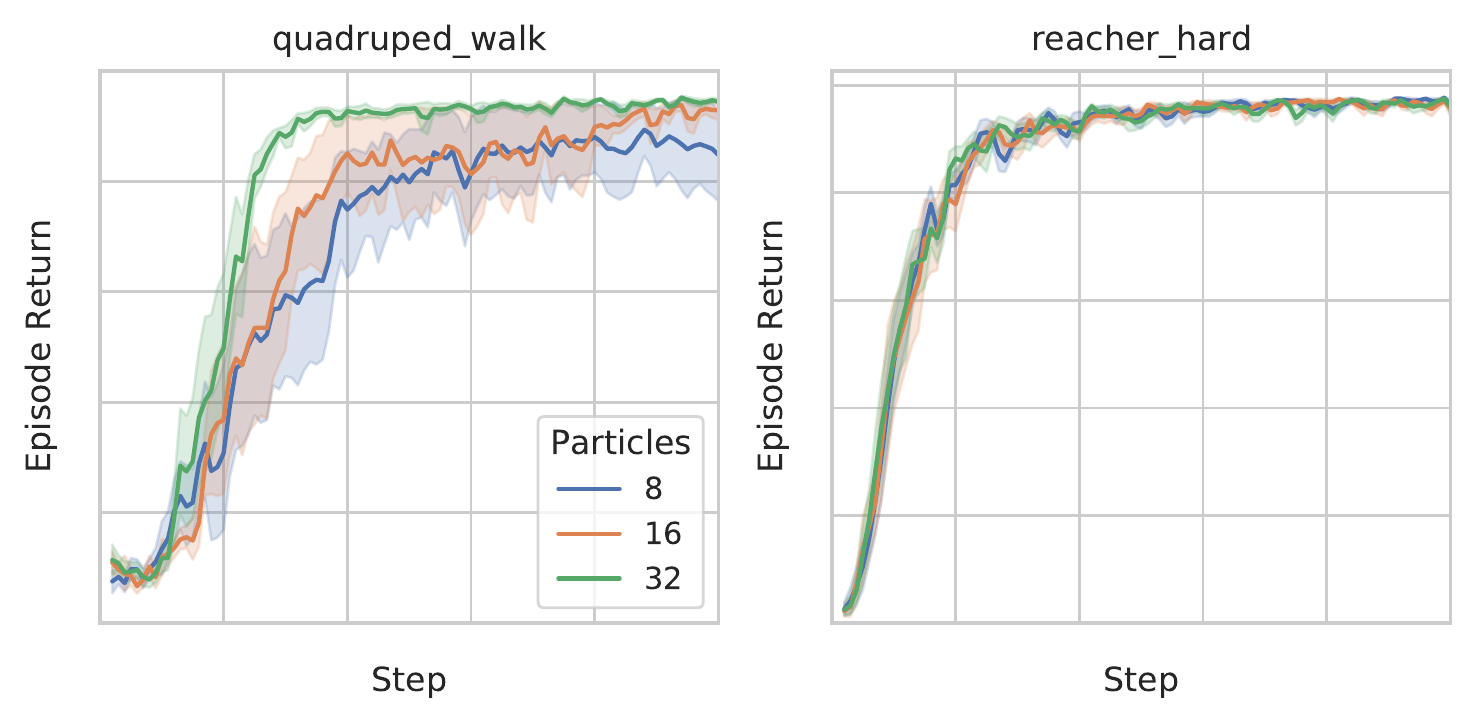}
\vspace{-0.5em}
\caption{Marginalization}
\label{fig:marginalization_ablation}
\end{subfigure}
\begin{subfigure}[b]{0.48\linewidth}
\centering
\includegraphics[width=\linewidth]{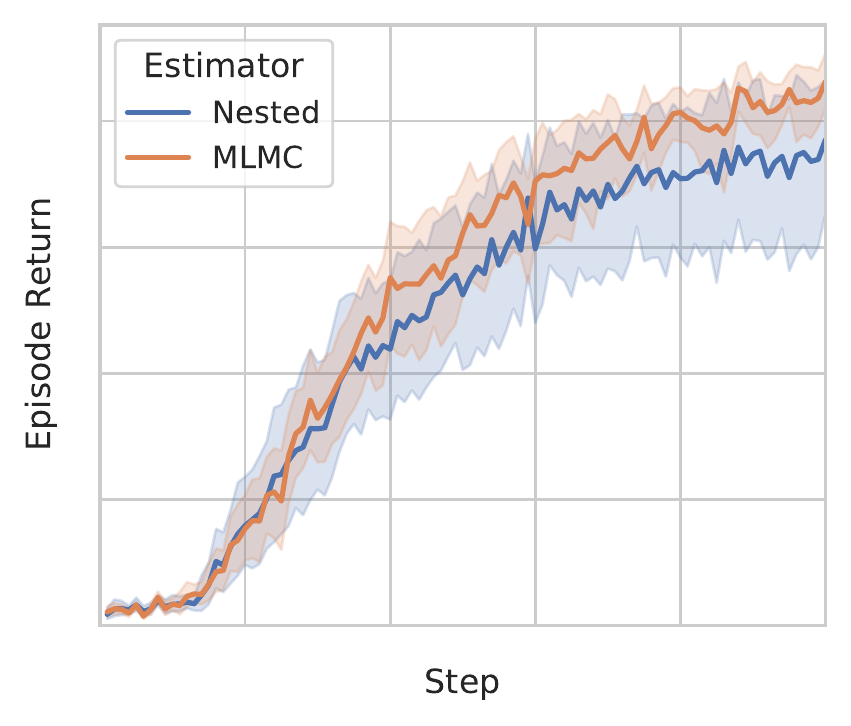}
\vspace{-1.5em}
\caption{MLMC}
\label{fig:mlmc_ablation}
\end{subfigure}
% \vspace{-0.5em}
\caption{Ablation experiments.}
\label{fig:ablation_particles}
\vspace{-.5cm}
\end{wrapfigure}
% \vspace{-0.5cm}
%
\paragraph{Marginalization} 
Marginalizing over the latent state has a significant effect on training, though this often exhibits a diminishing rate, suggesting that using reasonable number of particles is sufficient. Figure \ref{fig:ablation_particles} shows this behavior for the \texttt{quadruped\_walk} task.

\vspace{-0.3cm}
\paragraph{Variance-reduced updates}
We find that the use of MLMC as a variance reduction tool is crucial for the latent variable policy to perform well in some difficult environments such as the \texttt{quadruped\_escape} task.
\Figref{fig:mlmc_ablation} shows that using MLMC clearly reduces variance and makes training much more robust, whereas using only the nested estimator performs closer to the baseline SAC (see comparison in Figure \ref{fig:full_state_experiment}).

\subsection{Pixel-based continuous control environments}
\label{sec:pixel-experiments}

\paragraph{Setting} We next compare different algorithms on a series of DMC environments with pixel-based input.
Since this task is much more challenging, and pixels only provide partial observability, we make use of a world model (as described in \Secref{sec:prelin_world_model}) to supplement our algorithm. 
We use the recurrent state-space model (RSSM) architecture from \citet{Hafner2019LearningLD} as the world model. 
We refer to this baseline as ``Latent-SAC'' and follow the practice in~\citet{Wang2022DenoisedML}, which samples from the belief distribution $q(\s_t | \a_{< t}, \x_{\leq t})$ and directly trains using SAC on top of the belief state.
A closely related work, SLAC~\citep{lee2020stochastic}, only uses $\s_t$ as input to a learned Q-function, while the policy does not use $\s_t$ and instead uses intermediate layers of the world model as input.
Finally, we also compare to Dreamer, a model-based RL (MBRL) algorithm that performs rollouts on the dynamics model~\citep{Hafner2020DreamTC}.
This iterative procedure results in a higher computational cost as it requires iteratively sampling from the belief state and differentiating through the rollouts.
In contrast, our proposed \algname aggregates samples from the current belief state and does not require differentiating through the dynamics model.
% We do not directly apply our technique onto Dreamer as it is not a MaxEnt RL method.
For training, we follow \citet{Hafner2020DreamTC} and repeat each action $2$ times.
% during environment interactions.
We show the comparison on eight tasks in \Figref{fig:pixel_experiment}, and again relegate more results to the Appendix due to space constraints.

\begin{figure}
    \centering
    \includegraphics[width=\textwidth]{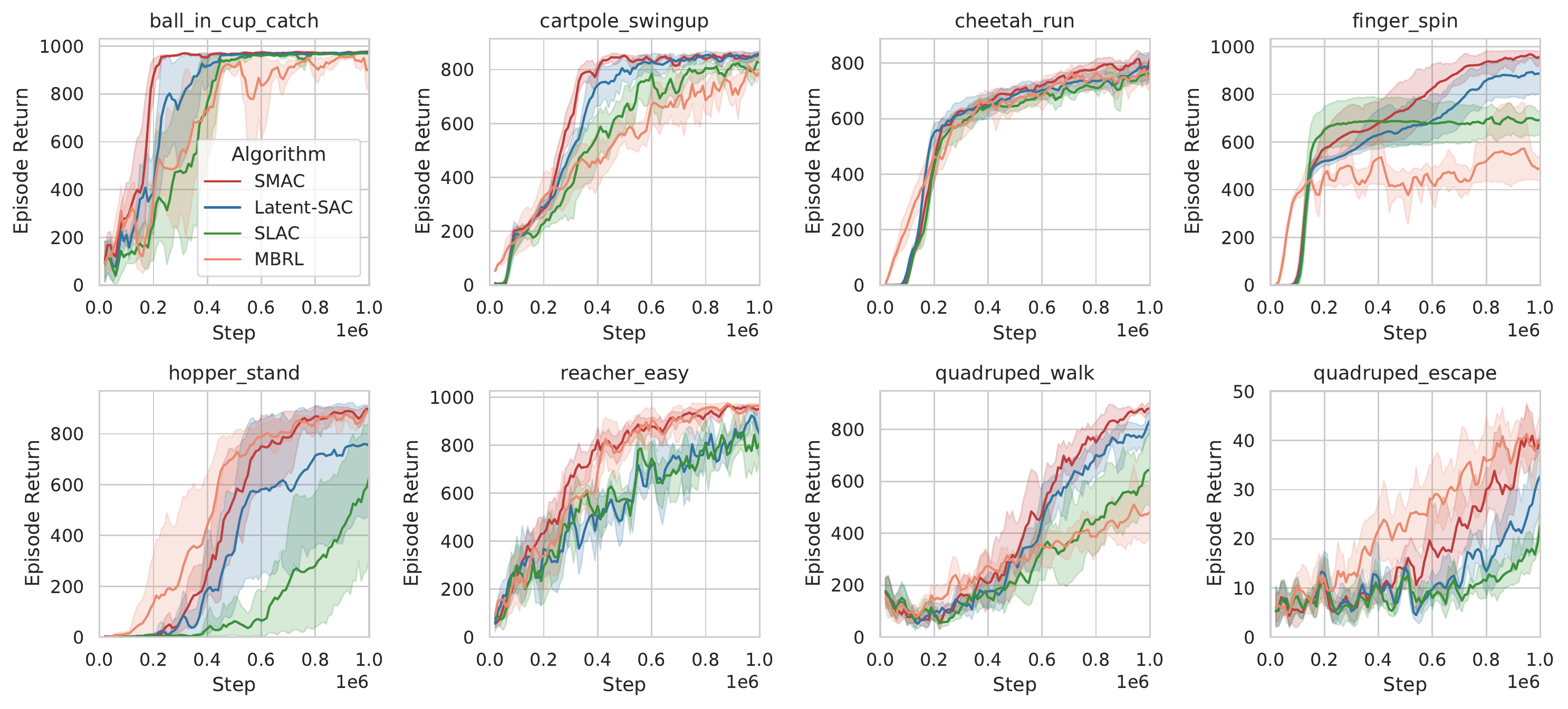}
    \caption{
    Experiments on eight DMC environments where agents are given pixel-based inputs.
    The proposed \algname approach achieves better expected return than similar Latent-SAC and SLAC baselines.
    Notice that our method does \textit{not} involve any planning, but still achieves comparable (sometimes even better) performance to the model-based RL algorithm.
    }
    \label{fig:pixel_experiment}
\end{figure}

\begin{table}[t]
\centering
\small{
\begin{sc}
\centering 
\caption{
Experiments with noisy observations. We experiment with two different perturbation level for two kinds of noise. \textsc{Gaussian perturbation} adds independent white noise to the pixel images while \textsc{Sensor missing} randomly turns a portion of the pixels to black. SMAC is trained with a world model while L-SAC denotes a SAC baseline trained with a world model. 
}
\label{tab:noisy_pixel_based}
\renewcommand{\ra}[1]{\renewcommand{\arraystretch}{#1}}
\ra{1.1}
\setlength{\tabcolsep}{1.0em}
\resizebox{\linewidth}{!}{%
    \begin{tabular}{@{} l rr rr rr rr @{}}
    \toprule
    & \multicolumn{4}{c}{Gaussian perturbation} &  \multicolumn{4}{c}{Sensor missing} \\
     Noise & \multicolumn{2}{c}{small} & \multicolumn{2}{c}{large} & \multicolumn{2}{c}{small} & \multicolumn{2}{c}{large} \\
    \cmidrule(lr){2-3} \cmidrule(lr){4-5} \cmidrule(lr){6-7} \cmidrule(lr){8-9}
    Method & L-SAC & \algname & L-SAC & \algname & L-SAC & \algname & L-SAC & \algname\\
    \midrule
    Finger & 
    $912$\std{138} & 
    $\mathbf{959}$\std{\hphantom{0}30} & 
    $880$\std{109} & 
    $\mathbf{924}$\std{\hphantom{0}43} & 
    $933$\std{\hphantom{0}63} & 
    $\mathbf{955}$\std{\hphantom{0}36} & 
    $921$\std{\hphantom{0}44} & 
    $921$\std{\hphantom{0}52} \\
    Hopper & 
    $571$\std{411} & 
    $\textbf{731}$\std{407} & 
    $516$\std{385} & 
    $\textbf{702}$\std{391} &
    ${721}$\std{\hphantom{0}16} & 
    $\textbf{872}$\std{\hphantom{0}62} & 
    ${703}$\std{\hphantom{00}7} & 
    $\textbf{866}$\std{\hphantom{0}61} \\
    Reacher & 
    $869$\std{\hphantom{0}28} & 
    $\textbf{928}$\std{\hphantom{0}10} & 
    $782$\std{\hphantom{0}89} & 
    $\textbf{925}$\std{\hphantom{0}92} & 
    $883$\std{112} & 
    $\textbf{937}$\std{\hphantom{0}89} & 
    $854$\std{169} & 
    $\textbf{925}$\std{\hphantom{0}59}  \\
    \bottomrule
    \end{tabular}
}
\end{sc}
}
\end{table}

\paragraph{Results} Comparing SMAC to the Latent-SAC baseline, we again find that we can often find an optimal policy with fewer environment interactions. 
We find that SLAC and Latent-SAC are roughly on par, while SLAC can also some times perform worse, as their policy does not condition on the latent state.
The model-based approach has widely-variable performance when compared to the actor-critic approaches.
Interestingly, in most of the environments where the model-based approach is performing well, we find that SMAC can often achieve comparable performance, even though it does not make use of planning.
Overall, we find that our method improves upon actor-critic approaches and bridges the gap to planning-based approaches.

\paragraph{Robustness to noisy observations}
While the pixel-based setting already provides partial observations, we test the robustness of our approach in settings with higher noise levels. 
In Table~\ref{tab:noisy_pixel_based} we report the episodic rewards of both \algname and a SAC baseline on three environments (\texttt{finger\_spin}, \texttt{hopper\_stand}, and \texttt{reacher\_easy}) under noisy perturbations and missing pixels~\citep{Meng2021MemorybasedDR}.
We find that SMAC behaves more robustly than the baseline across almost all settings.
% \ricky{Add details in Appendix.}

\paragraph{Efficiency}
Despite the extra estimation procedures, SMAC does not incur significant computational costs as we can compute all terms in the estimators in parallel.
Tested with an NVIDIA Quadro GV100 on the pixel-based environments, our \algname implementation does $60$ frames per second (FPS) on average,
almost the same training speed compared to Latent-SAC ($63$ FPS), whereas differentiating through a single rollout over the dynamics model already reduces to $51$ FPS (roughly $20\%$ slower).

\section{Conclusion}
We propose methods for better handling of latent variable policies under the MaxEnt RL framework, centered around cost-efficient computation and low-variance estimation, resulting in a tractable algorithm \algname when instantiated in the actor-critic framework. 
We find that \algname can better make use of the belief state than competing actor-critic methods and can more robustly find optimal policies, while adding only very minimal amounts of extra compute time.
%Our technique is also applicable to other recent MaxEnt RL algorithms such as DreamerV2~\citep{Hafner2021MasteringAW} and other settings such as multi-agent RL problems, which we leave for future directions.

% \paragraph{Reproducibility Statement}
% We state the reproduction details in \Secref{sec:experiments} as well as in \Secref{app:experiments}.

% Dinghuai's acknowledgement:
\section*{Acknowledgement}
The authors would like to thank 
Zhixuan Lin, Tianwei Ni, Chinwei Huang, Brandon Amos, Ling Pan, Max Schwarzer, Yuandong Tian, Tianjun Zhang, Shixiang Gu, and anonymous reviewers for helpful discussions. 
Dinghuai also expresses gratitude towards his fellow interns at FAIR for creating a lot of joyful memories during the summer in New York City.

\bibliography{ref}
\bibliographystyle{iclr2023_conference}

\newpage
\appendix

\section{Notations}

\begin{center}
\begin{tabular}{@{} l | l @{}} 
\toprule
Symbol & Description \\ [0.5ex]
\midrule
$\x_t$ & Environment state (human designed feature or pixel image) at time $t$ \\
$\a_t$ & Action at time $t$ \\
$\cont_t$ & History up to time $t$, defined as $(\a_{<t}, \x_{\leq t})$\\
$r$ & Reward \\
$\pi$ & Policy (distribution over action) \\
$Q$ & Q-function \\
$\widetilde Q$ & Q-function estimator \\
$\gamma$ & Discount factor \\
$\gO$ & Optimality binary random variable as defined in~\citet{levine2018reinforcement} \\
$\gS$ &  Domain of state \\
$\gA$ & Domain of action \\
$\gH(\cdot)$ & Entropy \\
$\widetilde\gH(\cdot)$ & Entropy estimator \\
$\alpha$ & Temperature for MaxEnt RL\\
\midrule
$\s_t$ & Latent variable / inferred belief state \\
$q(\s_t|\x_t)$ & Belief distribution over inferred states from observation $\x_t$ \\
$q(\s_t | \a_{<t}, \x_{\le t})$ & Belief distribution over inferred states from past data \\
$q(\s_t | \cont_t)$ & Unifies notation for the above two \\
$\pi(\a_t|\s_t)$ & Policy conditioned on latent variable $\s_t$ \\
$\pi(\a_t|\x_t)$ & Latent variable policy, equals $\int \pi(\a_t|\s_t)q(\s|\x)\dif\s$ \\
$\pi(\a_t|\a_{<t}, \x_{\le t})$ & Latent variable policy, equals $\int \pi(\a_t|\s_t)q(\s_t | \a_{<t}, \x_{\le t})\dif\s_t$ \\
$\pi(\a_t | \cont_t)$ & Unifies notation for the above two \\
\midrule
$p(\x_t|\s_t)$  & Observation model in the world model\\
$p(r_t|\s_t)$  & Reward model in the world model\\
$p(\s_{t+1}|\s_t, \a_t)$ & Transition model, also the prior of a world model \\
$q(\s_t|\s_{t-1}, \a_{t-1}, \x_t)$ & Inferred posterior dynamics model of learned world model \\
\bottomrule
\end{tabular}
\end{center}

\section{Additional details regarding methodology}
\label{sec:additional_methodology}

\subsection{Multi-modality of latent variable policies}
\label{sec:multi-modality}

While a latent variable policy can theoretically model any distribution (see Proposition~\ref{prop:lvmuniversal}), training this policy can still be difficult, especially if the true reward is actually multi-modal. Here, we test in a controlled setting, whether our method can truly recover a multi-modal policy.

A standard interpretation of the MaxEnt RL objective is as a reverse KL objective~\citep{levine2018reinforcement},
\begin{align}
    & \max_\pi \E_{p(\x)\pi(\a|\x))} 
    \left[ r(\x, \a) -
    \alpha \log \pi (\a | \x) \right] \\
    \Leftrightarrow & \max_\pi \E_{p(\x)\pi(\a|\x))} 
    \left[ \frac{r(\x, \a)}{\alpha} -
    \log \pi (\a | \x) \right] \\
    \Leftrightarrow  & \max_\pi \E_{p(\x)\pi(\a|\x))} 
    \left[ \log \exp \left\{ \frac{r(\x, \a)}{\alpha}\right\} -
    \log \pi (\a | \x) \right] \\
    \Leftrightarrow & \min_\pi \E_{p(\x)} 
    \left[ \KL \left( \pi (\a | \x) \Vert
    p^*(\a | \x) \right) \right]
\end{align}
where $p^*(\a | \x) \propto \exp \left\{ \frac{r(\x, \a)}{\alpha}\right\}$, \ie, a target distribution defined by the exponentiated reward function and annealed with temperature $\alpha$.

Despite the ubiquity of the reverse KL objective---such as appearing in standard posterior inference---the training of latent variable models for this objective is still relatively under-explored due to the difficulty in estimating this objective properly. \citet{Luo2020SUMOUE} showed that using improper bounds on the objective can lead to catastrophic failure, but only showed successful training for a unimodal target distribution, while \citet{Sobolev2019ImportanceWH} discussed proper bounds but did not perform such an experiment.

\begin{figure}
    \centering
    \begin{subfigure}[t]{0.3\linewidth}
    \centering
    \includegraphics[width=\linewidth]{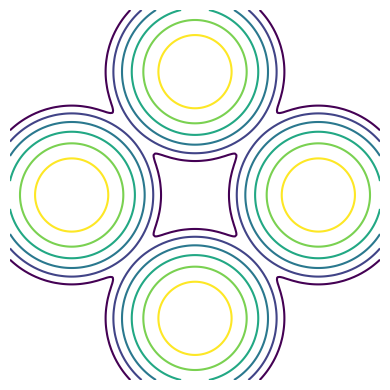}
    \caption{Target $\left( \exp \left\{ \frac{r(\x, \a)}{\alpha}\right\} \right)$}
    \label{fig:revkl_target}
    \end{subfigure}
    \hspace{0.2cm}
    \begin{subfigure}[t]{0.3\linewidth}
    \centering
    \includegraphics[width=\linewidth]{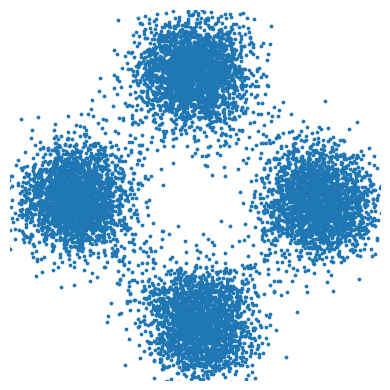}
    \caption{\vphantom{$\Big($}Latent variable policy}
    \label{fig:revkl_lvm}
    \end{subfigure}
    \hspace{0.2cm}
    \begin{subfigure}[t]{0.3\linewidth}
    \centering
    \includegraphics[width=\linewidth]{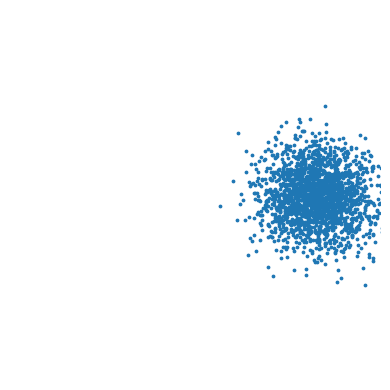}
    \caption{\vphantom{$\Big($}Gaussian policy}
    \label{fig:revkl_gaussian}
    \end{subfigure}
    \caption{Optimizing a latent variable policy for a one-step multi-modal MaxEnt RL objective.}
    \label{fig:revkl}
\end{figure}
We experiment by setting a reward function that has multiple optimal actions. Using a sufficiently large $\alpha$ creates a target distribution with four modes (\Figref{fig:revkl_target}).
In \Figref{fig:revkl_lvm}, we show that we can successfully learn a multi-modal distribution with a latent variable policy using the methods discussed in \Secref{sec:estimate_entropy} and \ref{sec:mlmc}.
On the other hand, a Gaussian policy can only capture one out of four modes (Figure \ref{fig:revkl_gaussian}), with the exact mode depending on the random initialization.

\subsection{World model learning}
\label{app:world_model_details}
In \Figref{fig:graphical_model_rssm} we visualize the graphical model for the RSSM similarly with~\citet{Hafner2019LearningLD} described in \Secref{sec:prelin_world_model}.
We use solid arrows to denote the generative machinery ($p$ in the following equations) and dotted arrows to denote the inference machinery ($q$ in the following equations).
A variational bound for the likelihood on  observed trajectory could be written as follows,
\begin{align}
\log &\ p(\x_{\le T}, r_{\le T}|\a_{\le T}) \ge
\ \E_{\s_{\le T}\sim q}\left[\log p(\x_{\le T}, r_{\le T}, \s_{\le T}| \a_{\le T}) - \log q(\s_{\le T}|\x_{\le T},\a_{\le T})\right] \\
=& \E_{q}\left[\sum_{t=1}^T\log p(\x_t|\s_t)+ \log p(r_t|\s_t) + \log p(\s_t|\s_{t-1}, \a_{t-1}) - \log q(\s_t|\s_{t-1}, \a_{t-1}, \x_t)\right]\\
=& \E_{q}\left[\sum_{t=1}^T\log p(\x_t|\s_t)+ \log p(r_t|\s_t) - \KL\left(q(\s_t|\s_{t-1}, \a_{t-1}, \x_t) \Vert p(\s_t|\s_{t-1}, \a_{t-1}) \right)\right].
\label{eq:complete_rssm_variational_bound}
\end{align}
The world model / RSSM is then learned by maximizing \Eqref{eq:complete_rssm_variational_bound} with regard to parameters of $p(\x|\s), p(r|\s), q(\s_t|\s_{t-1}, \a_{t-1}, \x_t)$ and $p(\s_t|\s_{t-1}, \a_{t-1})$.
Note that in \Secref{sec:prelin_world_model} we omit the reward modeling part for simplicity.
Due to the Markovian assumption on latent dynamics and the shorthand of $\cont_t \triangleq (\a_{<t}, \x_{\leq t})$ , we could also use $q(\s_t|\cont_t)$ to denote $q(\s_t|\s_{t-1}, \a_{t-1}, \x_t)$.

\subsection{\algname algorithm}

\tikzset{ 
roundnode/.style={circle, draw = black,
minimum size=0.9cm
},
shadownode/.style={circle, draw = black,
% draw=black!60!black, 
fill=black!10, 
minimum size=0.9cm
},}
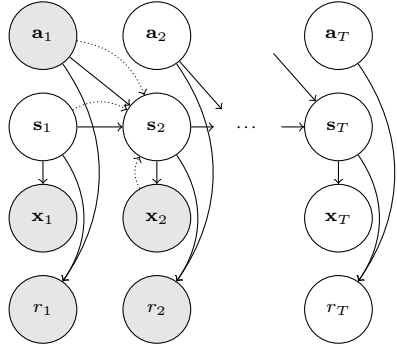
\begin{wrapfigure}[14]{r}{0.39\textwidth}
\vspace{-1em}
\begin{tikzpicture}[]
\scriptsize{
% Nodes
\node[roundnode] (s_1) {$\s_1$};
\node[shadownode] (a_1) [above =\interval of s_1]{$\a_1$};
\node[shadownode] (x_1) [below =\interval of s_1]{$\x_1$};
\node[shadownode] (r_1) [below =\interval of x_1]{$r_1$};

\node[roundnode] (s_2) [right = 2*\interval of s_1] {${\s_2}$};
\node[roundnode] (a_2) [above =\interval of s_2]{$\a_2$};
\node[shadownode] (x_2) [below =\interval of s_2]{$\x_2$};
\node[shadownode] (r_2) [below =\interval of x_2]{$r_2$};

\node[roundnode] (s_T) [right = 5*\interval of s_2] {$\s_{{T}}$};
\node[roundnode] (a_T) [above =\interval of s_T]{$\a_{{T}}$};
\node[roundnode] (x_T) [below =\interval of s_T]{$\x_{{T}}$};
\node[roundnode] (r_T) [below =\interval of x_T]{$r_{{T}}$};
}
% Lines
\draw[->][] (s_1) -- (x_1);
% \draw[->] (s_1) to[out=-45, in=45] (r_1);
\draw[->] (s_1) to[bend left=35] (r_1);
\draw[->] (a_1) to[bend left=35] (r_1);

\draw[->][] (s_1) to (s_2);
\draw[->][] (a_1) to (s_2);
\draw[->][] (s_2) -- (x_2);
\draw[->] (s_2) to[bend left=35] (r_2);
\draw[->] (a_2) to[bend left=35] (r_2);

\path (s_2) -- node[auto=false]{\ldots} (s_T);

% \draw[->][] (s_2) to[right];
% \draw[->][] (a_1) to (s_2);
\draw[->][] (s_T) -- (x_T);
\draw[->] (s_T) to[bend left=35] (r_T);
\draw[->] (a_T) to[bend left=35] (r_T);

% inference part
\draw[densely dotted, ->] (a_1) to[bend left=30] (s_2);
\draw[densely dotted, ->] (s_1) to[bend left=30] (s_2);
\draw[densely dotted, ->] (x_2) to[bend left=30] (s_2);

% undrawn nodes
\node[state,draw=none] [right =\interval of s_2] (s_t) {};
\draw[->] (s_2) to (s_t);
\draw[->] (a_2) to (s_t);
\node[state,draw=none] [left =\interval of s_T] (s_T_1) {};
\node[state,draw=none] [left =\interval of a_T] (a_T_1) {};
\draw[->] (s_T_1) to (s_T);
\draw[->] (a_T_1) to (s_T);
\end{tikzpicture}
\caption{Graphical model of a POMDP (\textit{solid}) and a world model (\textit{dashed}).}
\label{fig:graphical_model_rssm}
\end{wrapfigure}
In this section, we present the algorithmic details of \algname with and without world model in Algorithm~\ref{alg:smac_model_free} and Algorithm~\ref{alg:smac_with_world_model} respectively.
Both of the two algorithms follow the commonly adopted off-policy actor-critic style and utilize a replay buffer to save data for the update of both the actor and critic networks (the dependency of buffer $\gD$ is omitted in the algorithms).
Our \algname is based on SAC algorithm, whose critic is trained by minimizing TD error,
\begin{align}\label{eq:q_learning}
\gJ_Q = \left(Q(\x,\a) - \left(r + \gamma \bar{Q}(\x', \a') + \alpha \widetilde{\gH}(\pi(\cdot|\x'))\right)\right)^2,
\end{align}
where $(\x, \a, r, \x')\sim\gD$, $\a'\sim\pi(\cdot|\x')$, $\bar Q$ denotes a stop gradient operator and $\widetilde{\gH}$ is an estimate for the policy entropy. In our case, we estimate the entropy of latent variable policy with \Eqref{eq:mlmc_terms} as discussed in \Secref{sec:maxenrl_with_latents}.
What's more, the actor is updated via minimizing
\begin{align}\label{eq:actor_learning}
\gJ_\pi =  - Q(\x, \a) - \alpha \widetilde{\gH}(\pi(\cdot|\x)),
\end{align}
where $\x\sim\gD$ and $\a\sim\pi(\cdot|\x)$, which is equivalent to $\a\sim\pi(\cdot|\s)$, $\s\sim q(\s|\x)$.
In the algorithm box we omit the moving average of the critic network for simplicity, which is adopted as common-sense.
We remark that \algname has not much difference with SAC in the sense of RL algorithmic details, but mainly achieve improvement with the structured exploration behavior achieved from latent variable modeling.
For \algname in conjunction with a world model, we learn the critic network by minimizing TD error on the latent level, 
\begin{align}\label{eq:q_latent_learning}
\gJ_Q = \left(Q(\s,\a) - \left(r + \gamma \bar{Q}(\s', \a') + \alpha \widetilde{\gH}(\pi(\cdot|\s'))\right)\right)^2,
\end{align}
whose terms can be seen as one sample estimate to each term in \Eqref{eq:q_learning}. 
We also try to directly train the critic network on the observation level, but the empirical difference is negligible.
As a result, we keep the latent level TD learning for simplicity's sake.

We next state a method to encourage exploration through conditional entropy minimization.
Entropy in a latent variable policy (\Eqref{eq:lvp_mdp}) can be increased by either dispersing the probability density to other modes, or by increasing the entropy at a single mode. The latter corresponds to increasing the entropy of the conditional distribution $\pi(\a_t | \s_t)$, which can end up as a shortcut to increasing entropy and can result in spurious local minima during training.
This is in fact a well-known issue that sampling-based objectives run into~\citep{rainforth2018inference,midgley2022flow}, resulting in a policy that explores the space of action trajectories at a slower pace. 
On the other hand, notice that in the proof of Proposition~\ref{prop:lvmuniversal} in \Secref{sec:proof_universality} we require the decoder variance to be sufficiently small to be expressive, thus
we propose remedying this issue for MaxEnt RL by adding a conditional entropy term to the objective:
\begin{equation}
    \max_\pi \E_{p(\tau)} \left[ \sum_{t=0}^\infty \gamma^t \left( r_t(\x_t, \a_t) + \alpha \gH(\pi(\cdot | \x_t)) - \beta \E_{q(\s_t | \x_t)} \left[ 
    \gH(\pi(\cdot | \s_t)) \right] \right)\right].
\end{equation}
The conditional entropy $\gH(\pi(\cdot | \s_t))$ represents the entropy around a single mode. By \emph{minimizing} this in conjunction with maximizing the marginal entropy, we incentivize the policy to disperse its density to other modes, encouraging it to explore and find all regions with high reward. This allows the latent variable policy to make better use of its source of randomness $\s_t$, and encourages a nonlinear mapping between $\s_t$ and the action space.
This is in stark contrast to entropy maximization with a Gaussian policy, where random noise is simply added linearly and independently to the action.
This technique is only useful for a few experiments
% finger\_spin
(hopper\_hop humanoid\_run humanoid\_stand quadruped\_escape quadruped\_run reacher\_easy walker\_run) on state-based model-free experiments.
In the pixel-based experiments where \algname leverages a world model, the distribution of the latent variable is learned within the world model, thus there is no need to further involve such a regularizer.

\paragraph{Unadopted techniques}
Our proposed technique is universal and could be applied to any MaxEnt RL algorithm. 
For example, we also try to combine Dreamer with MaxEnt in the policy optimization part, and wish to further improve the performance with our method.
Nonetheless, as stated in the Appendix of \citet{Hafner2020DreamTC}, there is no much positive effect of introducing MaxEnt principle into Dreamer implementation.
Indeed, we find that doing MaxEnt on the actor of Dreamer will in fact lower down the sample efficiency. 
Therefore, we think it is not meaningful to put the experiments of this part into this work.

Another unused technique lies in latent variable modeling.
MLMC with finite samples (\ie, $K < \infty$) still gives a biased estimator. On the other hand, Russian Roulette estimator~\citep{kahn1955use} enables an unbiased estimate of an infinite series of summation with the help of randomized truncations, together with corresponding term upweighting operation.
This technique is also used in many modern machine learning problems~\citep{Xu2019VariationalRR,Chen2019ResidualFF}.
As a result, we also try to introduce Russian Roulette calculation into our MLMC estimator. 
However, we do not find much evident improvement in our RL experiments, thus we do not take this technique into the final \algname algorithm.

\begin{figure}[t]
\centering
\begin{minipage}{0.46\textwidth}
\begin{algorithm}[H]  % must use H for wrapfigure
\caption{\algname (without a world model)}
\label{alg:smac_model_free}
% \hspace*{\algorithmicindent} 
% \textbf{Input} 
\begin{algorithmic}[1]
% \INPUT {aa}\\ algorithmic doesn't have "\INPUT"
\FOR {each step}
\STATE \texttt{//Env interaction}
\STATE $\a\sim\pi(\a|\s)$, $\s\sim q(\s|\x)$
\STATE $r, \x'\gets \texttt{env.step(}\a\texttt{)}$
\STATE $\gD\gets \gD \cup \{(\x,\a,r,\x'\}$
\STATE 
\STATE \texttt{//Critic learning}
\STATE Update $Q$ to minimize Eq.~\ref{eq:q_learning}
\STATE
\STATE \texttt{//Actor learning}
\STATE Calculate $\widetilde{\gH}^\text{MLMC}_K$  via Eq.~\ref{eq:mlmc_terms} 
\STATE Update $\pi$ to minimize Eq.~\ref{eq:actor_learning}
\ENDFOR
\end{algorithmic}
\end{algorithm}
\end{minipage}
\begin{minipage}{0.53\textwidth}
\begin{algorithm}[H]  % must use H for wrapfigure
\caption{\algname (with a world model)}
\label{alg:smac_with_world_model}
% \hspace*{\algorithmicindent} \textbf{Input} aaa
\begin{algorithmic}[1]
% \INPUT {aa}\\ algorithmic doesn't have "\INPUT"
\FOR {each step}
\STATE \texttt{//Environment interaction}
\FOR {$t=1\ldots T$}
\STATE $\a_t\sim \pi(\a_t|\s_t)$, $\s_t\sim q(\s_t|\s_{t-1}, \a_{t-1}, \x_t)$ 
\STATE $r_t, \x_{t+1}\gets \texttt{env.step(}\a_t\texttt{)}$
\ENDFOR
\STATE $\gD\gets\gD \cup \{(\x_t,\a_t,r_t)_{t=1}^T\}$
\STATE
\STATE \texttt{//World model learning}
\STATE Update world model to maximize Eq.~\ref{eq:complete_rssm_variational_bound}
\STATE
\STATE \texttt{//Critic learning}
\STATE Update $Q$ to minimize Eq.~\ref{eq:q_latent_learning}
\STATE
\STATE \texttt{//Actor learning}
\STATE Calculate $\widetilde{\gH}^\text{MLMC}_K$  via Eq.~\ref{eq:mlmc_terms} 
\STATE Calculate $\widetilde{Q}_K$ via Eq.~\ref{eq:logsumexp_Q}
\STATE Update policy to maximize $\widetilde{Q}_K + \alpha\widetilde{\gH}^\text{MLMC}_K$
\ENDFOR
\end{algorithmic}
\end{algorithm}
\end{minipage}
\end{figure}

\section{Additional details regarding experiments}
\label{app:experiments}

% For all plots, we take rolling average with window size $3$ to slightly smooth the curve for better visualization effect.
For all episode return curves, we report the mean over $5$ seeds and $95\%$ confidence intervals.

\paragraph{Regarding \Figref{fig:entropy_estimation}}
We run \algname on DMC finger spin environment with different marginal log probability estimators.
We obtain the ground truth value via expensive Monte Carlo estimation with $1 \times 10^5$ samples.
From the figure, we can see that there is little hope of using a $\text{na\"ive}$ upper bound of entropy for MaxEnt RL, where a reasonable scale of the entropy term is $\sim 10^{0}$.
For IWAE~\citep{Burda2016ImportanceWA} implementation, we set the number of particles to $32$. 
Other values for the number of particles give similar results.

\begin{wrapfigure}[]{r}{0.5\textwidth}
% \begin{figure}[t]
\centering
% \begin{minipage}{0.49\textwidth} 
\vspace{-.5cm}
% \end{minipage}
% \hspace{0.1cm}
\begin{minipage}{0.5\textwidth} 
    \includegraphics[width=\linewidth]{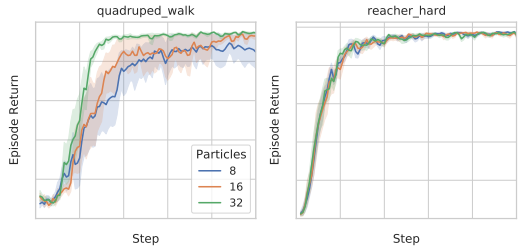}
    \caption{
    Ablation study of the number of particles for \algname on \texttt{quadruped walk} and \texttt{reacher hard} environments.
    The effect is evident on some but not all environments.
    }
    \label{fig:ablation_particles_supp}
\end{minipage}
% \end{figure}
\end{wrapfigure}
% \vspace{-0.2cm}

\paragraph{Regarding \Secref{sec:state-experiments}}
For model-free experiments, the agents are fed with state-based inputs.
We follow the PyTorch model-free SAC implementation of~\citet{pytorch_sac} for this part.
The actor is parametrized by a tanh-Gaussian distribution.
For our method, we additionally use a two layer MLP to parametrize the latent distribution $q(\s|\x)$.
We set the neural network width of the baselines and \algname to $400$ and $256$ respectively to keep comparable number of parameters.
For the entropy coefficients, we use the same autotuning approach from SAC~\citep{haarnoja2018soft}.
We follow \citet{Fujimoto2018AddressingFA} for the TD3 implementation details, except that we do not take its $1\times 10^{-3}$ learning rate. 
This is because we found the original learning rate gives very poor performance in our experiments, so instead we set its learning rate to $3\times 10^{-4}$, which is empirically much better and also consistent with two other algorithms.
We conduct ablation study for the number of particles (\ie, $K$ in \Eqref{eq:mlmc_terms}) used in \Figref{fig:ablation_particles}.
This indicates that the number of level / particles used in the estimation has an effect on some of the environments. 
We choose the best hyperparameters (number of particles in $\{8, 16, 32\}$, dimension of the latent in $\{8, 16, 32\}$) for each environment.
We show the full set of experimental results in \Figref{fig:full_state_experiment}.

\begin{figure}
    \centering
    \includegraphics[width=\textwidth]{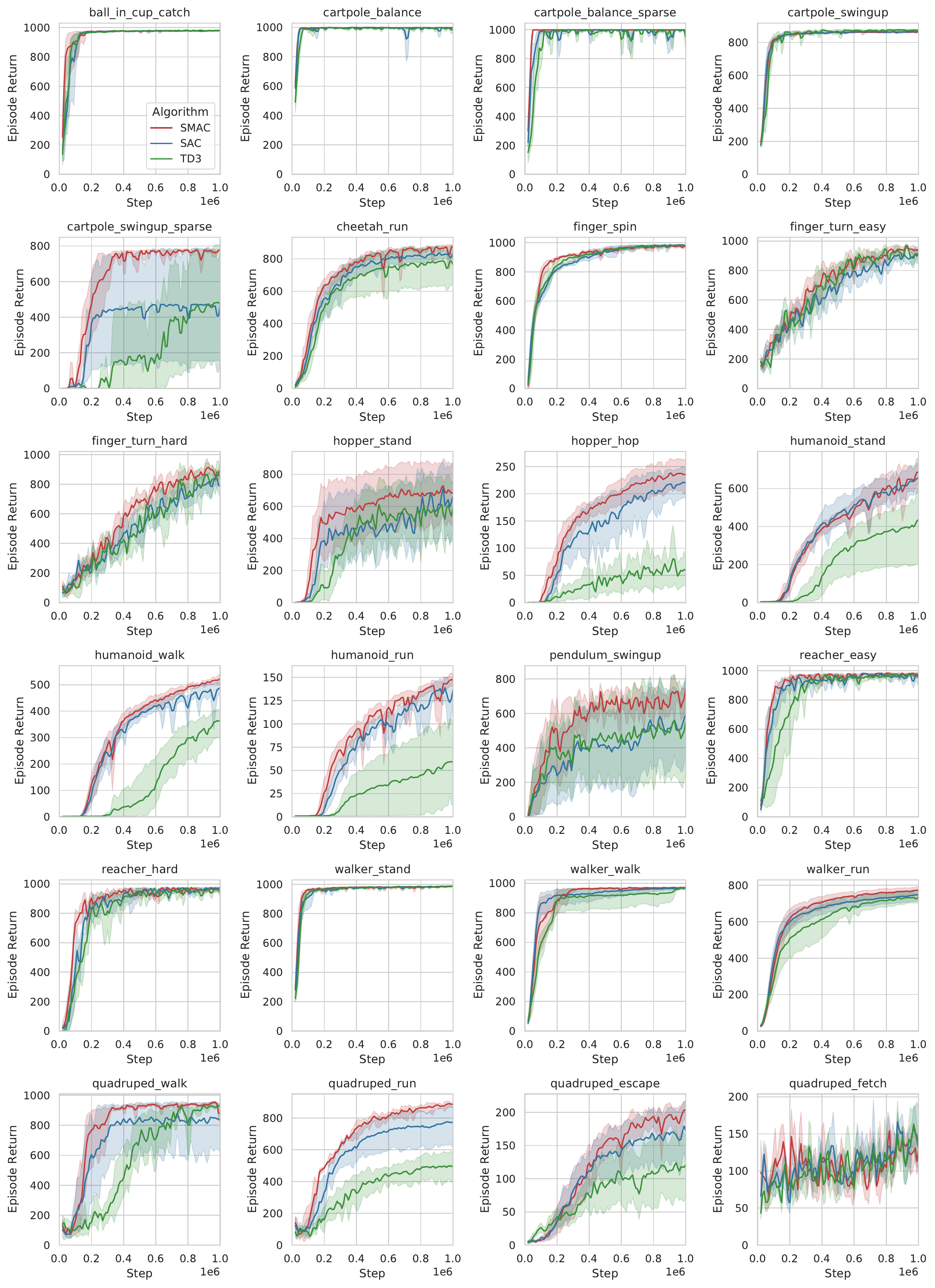}
    \caption{Experiment results on different DMC environments with state-based observations.}
    \label{fig:full_state_experiment}
\end{figure}

\begin{figure}
    \centering
    \includegraphics[width=\textwidth]{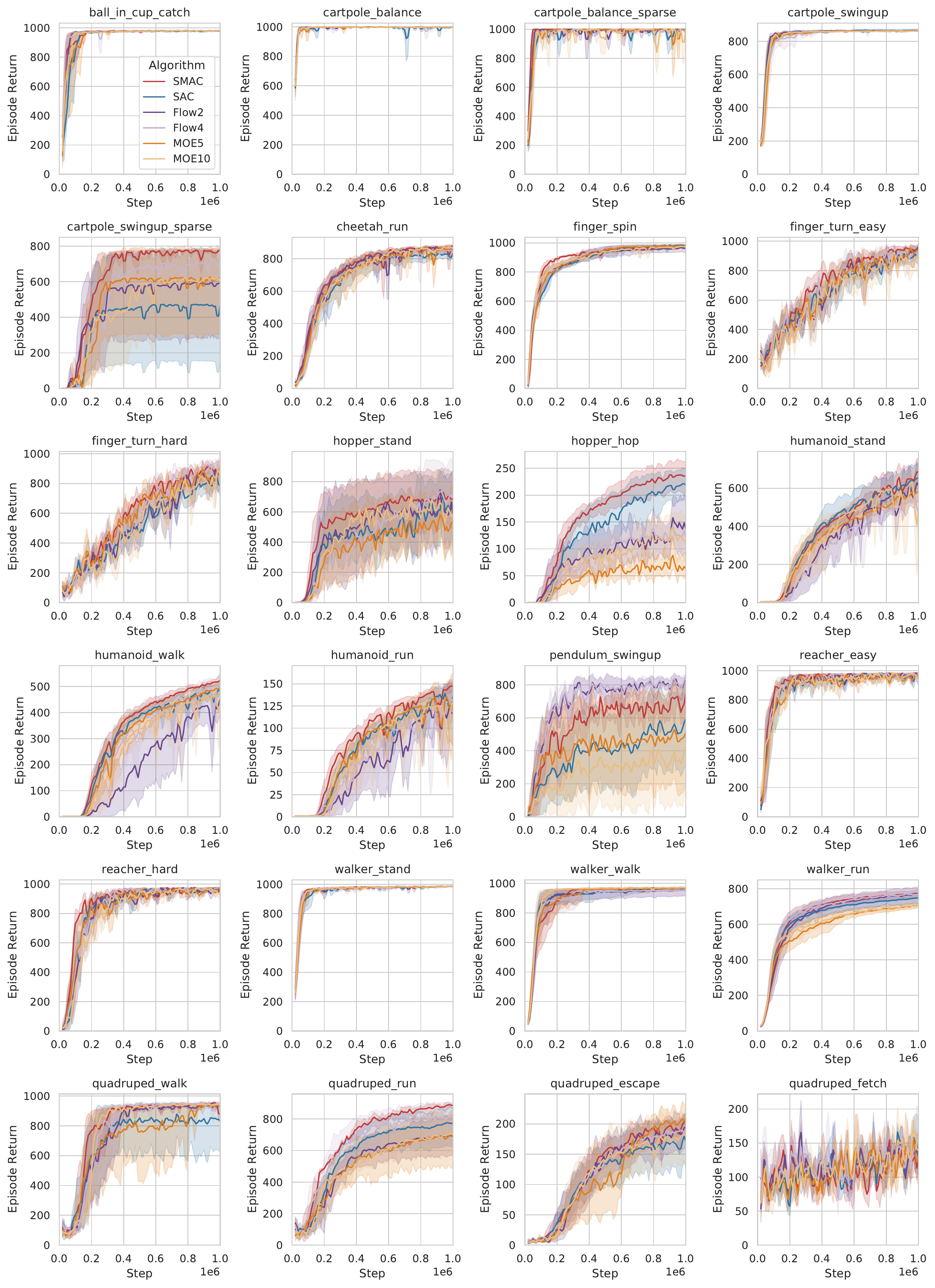}
    \caption{\iclrrebuttal{
    Experiment results about comparison with other probabilistic policy modeling methods on different DMC environments with state-based observations. ``Flow2" and ``Flow4" refer to normalizing flow based policy with two or four RealNVP blocks as backend, while ``MOE5" and ``MOE10" refer to probabilistic mixture-of-experts policy with five or ten mixtures. MOE, SAC and \algname share similar number of parameters, while flow methods have about two or four times more parameters. Our proposed \algname general achieves the best sample efficiency on a majority of environments. 
    }}
\label{fig:full_state_experiment_ablation}
\end{figure}

\begin{figure}
    \centering
    \includegraphics[width=\textwidth]{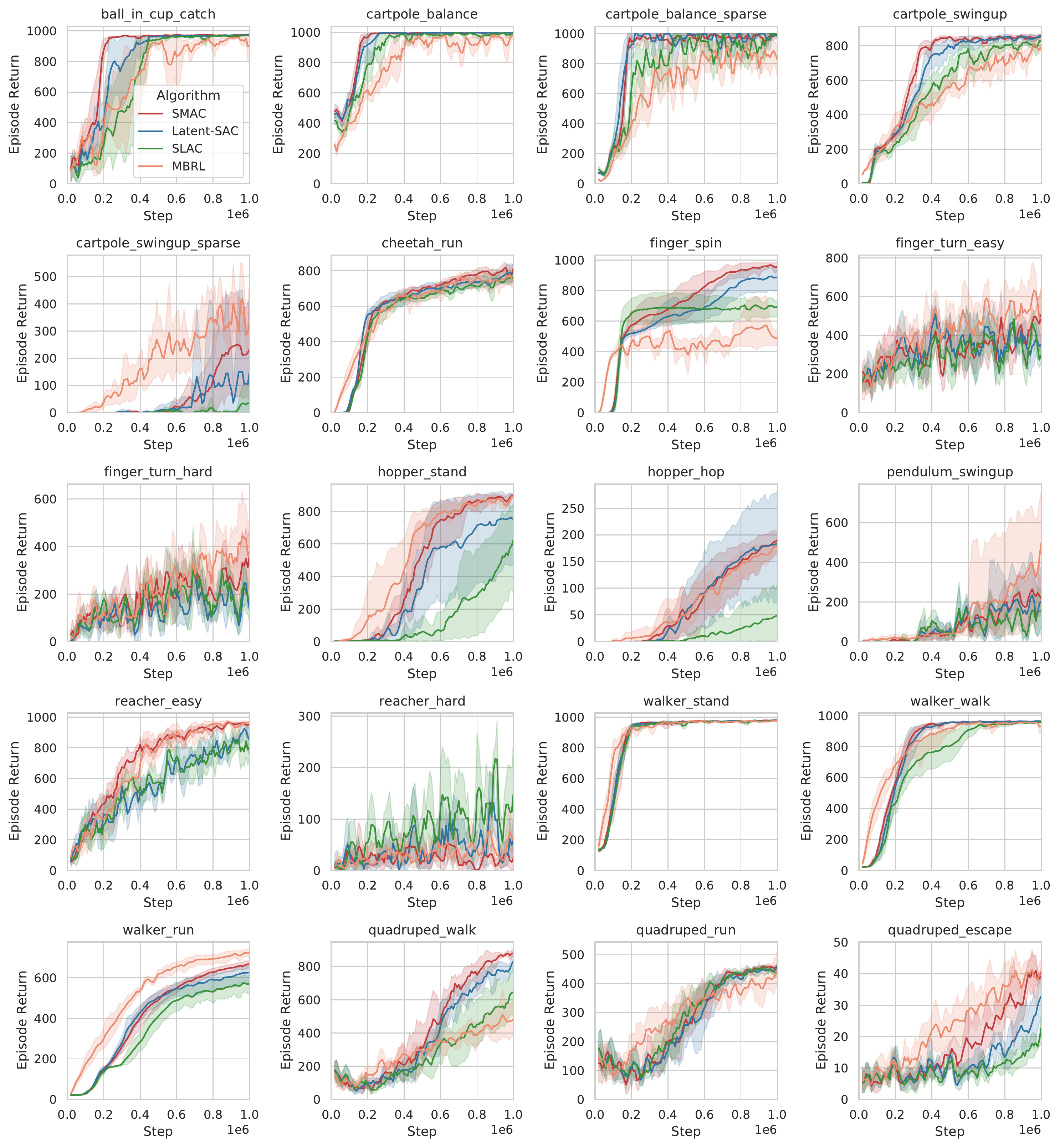}
    \caption{Experiment results on different DMC environments with pixel-based observations.
    %We do not include the results of the \texttt{humanoid} domain as none of the methods could obtain meaningful results within $1$ million frames.
    }
    \label{fig:full_pixel_experiment}
\end{figure}

\paragraph{\iclrrebuttal{Regarding other probabilistic policy modeling methods}}
\iclrrebuttal{
We further compare with normalizing flow based policy and mixture-of-experts. For the normalizing flow based method, we follow the practice of \citet{haarnoja2018latent, ward2019improving} to use RealNVP~\citep{Dinh2017DensityEU} architecture for the policy distribution $\pi(\a|\s)$. The neural network is also followed by a tanh transformation as other methods. We experiment with two-block and four-block RealNVPs, which makes their number of parameters to be approximately twice and four times as the number of SAC and \algname methods (these two share approximate the same number of parameters). We show their performance in Figure~\ref{fig:full_state_experiment_ablation} as ``Flow2" and ``Flow4", respectively. About the probabilistic mixture-of-experts (MOE) method, we follow the practice of~\citet{Ren2021ProbabilisticMF} to use a five Gaussian mixture and a ten Gaussian mixture. We show their performance in Figure~\ref{fig:full_state_experiment_ablation} as ``MOE5" and ``MOE10", respectively. This probabilistic MOE shares roughly around the same number of parameters and compute cost as SAC and \algname. To conclude, \algname achieves more favorable performance across a majority of the control tasks with less or equal number of parameters (notice that \algname is the red curve in the figure). Normalizing flow based polices outperform \algname on only one environment, while MOE requires the use of biased gradient estimators which often required extra hyperparameter tuning or could lead to worse performance.
}

\paragraph{Regarding \Secref{sec:pixel-experiments}}
For this part, the agents are fed with pixel-based inputs.
We mainly follow the PyTorch world model implementation of \citet{pytorch_dreamer} for this part.
We implement the Latent-SAC algorithm according to instructions and hyperparameters in \citet{Wang2022DenoisedML}.
We implement the SLAC~\citep{lee2020stochastic} algorithm following its original github repo~\citep{slac_implementation}.
Note that Latent-SAC and SLAC are two different algorithms, in the sense of actor modeling and world model design, although they both build on a SAC backend.
We select the hyperparameters in the same way as the above part.
We show the full version of experimental results in \Figref{fig:full_pixel_experiment}.
We do not plot the results of the \texttt{humanoid} domain as well as \texttt{quadruped fetch}, as all methods could not obtain meaningful results within $1$ million frames.

For the robustness experiments, we add two kinds of noises in the pixel space, namely Gaussian perturbation and sensor missing perturbation.
For Gaussian perturbation, we add isotropic Gaussian noise with scale $0.01$ and $0.05$.
For sensor missing, we randomly drop the pixel value for each dimension to zero according to a Bernoulli distribution (with parameter $0.01$ and $0.05$) in an independent way.
For results in Table~\ref{tab:noisy_pixel_based}, we report the best episodic reward across training iterations with standard deviation estimated from $5$ seeds.

\section{Theoretical Derivations}
\label{sec:theory}
\subsection{Universality of Latent Variable Models}
\label{sec:proof_universality}

We would first need the help of the following Lemma for the proof.
\begin{lemma}
\label{lemma:lvm}
For any continuous $d$-dimensional distribution $p^*(x)$ and $\forall \eps > 0$, there exists a neural network $\Psi: \R \rightarrow \R^d $ with finite depth and width that satisfies $\gW\left[\Psi \# \gN(0, 1) || p^*(\cdot)\right] \le \eps$. Here $\#$ is the push-forward operator, $\gW(\mu,\nu):=\inf_{\pi\in\Pi(\mu,\nu)} \int |x-y|\dif \pi(x, y)$ is the Wasserstein metric, and $\gN(0, 1)$ is the standard Gaussian distribution.
\end{lemma}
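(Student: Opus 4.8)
### Proof Plan for Lemma \ref{lemma:lvm}

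\textbf{Setup and strategy.} The goal is to approximate an arbitrary continuous $d$-dimensional distribution $p^*$ in Wasserstein-1 distance by pushing the 1-dimensional standard Gaussian through a finite neural network $\Psi:\R\to\R^d$. The plan is to proceed in two stages. First, I would reduce the problem from ``push forward $\gN(0,1)$'' to ``push forward the uniform $\gU[0,1]$'', using the inverse-CDF (probability integral) transform: since the standard Gaussian CDF $\Phi$ is a fixed continuous monotone bijection, $\Phi\#\gN(0,1)=\gU(0,1)$, and $\Phi$ can itself be uniformly approximated on any compact interval by a neural network (by the universal approximation theorem for continuous functions). So it suffices to build a network $g:\R\to\R^d$ with $\gW[g\#\gU(0,1)\,\|\,p^*]\le\eps$ and then precompose with an approximation of $\Phi$.

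\textbf{Main construction via space-filling / piecewise-linear curves.} For the core step, I would use the fact that a compactly supported approximation of $p^*$ can be captured by a finitely supported (discrete) distribution, then realized as a pushforward of $\gU[0,1]$. Concretely: (i) truncate $p^*$ to a large cube so that the truncated-and-renormalized measure is within $\eps/3$ in $\gW$; (ii) partition the cube into $N$ small cells and replace $p^*$ by a discrete measure $\sum_i w_i \delta_{y_i}$ (cell centers weighted by mass) that is within $\eps/3$ in $\gW$; (iii) partition $[0,1]$ into consecutive subintervals $I_i$ of length $w_i$ and let $g$ map (the interior of) $I_i$ near $y_i$, connecting the pieces with short linear ``bridges'' whose total length contributes at most $\eps/3$ to the transport cost. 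This $g$ is piecewise linear with finitely many pieces, hence exactly representable (or approximable to arbitrary accuracy) by a finite ReLU network $\R\to\R^d$. Composing, $\Psi := g\circ(\text{net approximating }\Phi)$ is a finite network with $\gW[\Psi\#\gN(0,1)\,\|\,p^*]\le\eps$ by the triangle inequality for $\gW$ together with the fact that pushforward through a (nearly) $1$-Lipschitz-on-support map does not blow up the Wasserstein distance.

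\textbf{Key estimates to control.} The quantitative glue is: (a) Wasserstein continuity under pushforward — if $\gW[\mu\|\nu]\le\delta$ and $T$ is $L$-Lipschitz then $\gW[T\#\mu\|T\#\nu]\le L\delta$; I would apply this to absorb the error from replacing the approximate-$\Phi$ network by $\Phi$ itself, keeping the relevant Lipschitz constant of $g$ bounded (this is where choosing the cell size and the number of bridge pieces matters). (b) The discretization error bound $\gW[\sum_i w_i\delta_{y_i}\,\|\,p^*_{\text{trunc}}]\le \text{diam(cell)}$, which is elementary. (c) The truncation error, controlled by tightness of $p^*$.

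\textbf{Main obstacle.} The subtle point is not the topology/measure theory but making sure all the approximating objects are genuinely \emph{finite} neural networks with the claimed input dimension $1$ — in particular that the 1-D $\to$ $d$-D piecewise-linear map with $N$ breakpoints is a finite-width, finite-depth ReLU (or tanh) network, and that stitching it after a network approximating $\Phi$ keeps depth/width finite. A secondary delicate point is bounding the Lipschitz constant of $g$ on the relevant region so that step (a) does not force the bridge lengths (and hence $N$) to explode; this requires choosing the bridges to have length comparable to the cell diameter and verifying the resulting transport plan cost telescopes correctly. I expect the bookkeeping of these three $\eps/3$ budgets, together with the Lipschitz-constant tracking, to be the main thing to get right; the neural-network realizability is then standard.
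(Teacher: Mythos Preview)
Your approach shares the paper's high-level structure: reduce to the problem of pushing forward $\gU(0,1)$, then precompose with the Gaussian CDF $\Phi$. The difference is that the paper dispatches the first step in one line by citing Theorem~5.1 of Perekrestenko et al.\ (2020), which already gives a finite ReLU network $\tilde\Psi:\R\to\R^d$ with $\gW[\tilde\Psi\#\gU[0,1]\,\|\,p^*]\le\eps$, whereas you reconstruct that result by hand via truncation, discretization to $\sum_i w_i\delta_{y_i}$, and a piecewise-linear curve with short bridges. Your route is more self-contained and makes the mechanism explicit; the paper's is shorter but relies on an external theorem.

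On one point you are actually more careful than the paper: the paper simply sets $\Psi:=\tilde\Psi\circ\Phi$ and declares it a neural network, glossing over the fact that $\Phi$ is not itself a finite network. You correctly flag that $\Phi$ must be approximated by a network and that the resulting error has to be pushed through $g$. Your worry about ``keeping the Lipschitz constant of $g$ bounded'' is slightly misdirected, though: you do not need a uniform bound. For each fixed $\eps$ you first freeze $g$ (with whatever finite Lipschitz constant $L$ it has), and only then choose the $\Phi$-approximation accurate enough that $L$ times its error is below your budget; the unbounded support of $\gN(0,1)$ is handled by approximating $\Phi$ uniformly on a large interval carrying all but a negligible amount of Gaussian mass. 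With that order of quantifiers the bookkeeping goes through cleanly.
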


\begin{proof}
The  Theorem 5.1 of \citet{Perekrestenko2020ConstructiveUH} shows that for any $p^*(x)$ and $\eps>0$, there exists a nonlinear ReLU neural network with finite size  $\tilde\Psi: \R \rightarrow \R^d$ that satisfies $\gW\left[\tilde\Psi \# U[0, 1] || p^*(\cdot)\right] \le \eps$, where $U[0, 1]$ is the uniform distribution on $[0, 1]$.
On the other hand, it is well known that the cumulative distribution function (cdf) of standard Gaussian $\Phi(\cdot):\R\rightarrow\R$ could map the standard Gaussian to the uniform distribution $U[0, 1]$, thus we have the construction of $\Psi:= \tilde\Psi\circ\Phi$.
\end{proof}

We use this result to prove Proposition~\ref{prop:lvmuniversal}.

\lvmuniversal*
\begin{proof}[Proof of Proposition~\ref{prop:lvmuniversal}]
We let $z\in\R$ and set $p_n(z)=\gN(0, 1), \forall n\in \sN_{+}$.
From the above Lemma~\ref{lemma:lvm}, we know that $\forall n\in\sN_{+}, \exists \Psi_n:\R\rightarrow \R^d $ s.t. $\gW\left[\Psi_n\# p_n(z) || p^*(x)\right]\le \frac{1}{n}$, where $\Psi_n$ is a finite size neural network.
We then set $p_n(x|z) = \gN(x; \Psi_n(z), \frac{1}{n^2})$.
Note that this falls into the category of factorized Gaussian.

Let $\pi_0$ be a coupling between $p_n(x)$ and $\Psi_n\# p_n(z)$, where $\pi_0(x, x')$ is the joint distribution over $(x, x')$ and $x' = x + \zeta / n,\ \zeta\sim \gN(0, 1), x\sim \Psi_n\# p_n(z)$.
We thus have $\gW\left[p_n(x) || \Psi_n\# p_n(z)\right] \le \int |x-x'| \dif \pi_0(x, x') = \frac{1}{\sqrt{2\pi}n} < \frac{1}{n}$.
Since the Wasserstein metric satisfies the triangle inequality, we have $\gW\left[p_n(x) || p^*(x)\right] \le \gW\left[p_n(x) || \Psi_n\# p_n(z)\right] + \gW\left[\Psi_n\# p_n(z) || p^*(x)\right] \le \frac{2}{n} \stackrel{n\rightarrow \infty}{\longrightarrow} 0$.
\end{proof}

\subsection{Tight Lower Bound on the Marginal Entropy}

\begin{proposition}[Lower bound of marginal entropy]
\label{prop:logp_upper_bound}
For a latent variable policy $\pi(\a | \cont) := \int \pi(\a|\s) q(\s | \cont) \dif z$ with prior $q(\s | \cont)$ and likelihood $\pi(\a|\s)$, consider
% \begin{align*}
% \widetilde{\gH}_K(\x) \triangleq \E_{\a_0\sim \pi(\a|\s)}\E_{\s_{1:K}\sim q(\s |\x)}\left[\log\left(\frac{1}{K+1}\sum_{k=0}^K \pi(\a| \s)\right)\right]
% \end{align*}
\begin{equation}
    \widetilde{\gH}_{K}(\cont) \triangleq 
    \E_{\a \sim \pi \left(\a | \cont \right)}
    \E_{\s^{(0)}\sim p(\s | \a, \cont)}
    \E_{\s^{(1:K)}\sim q(\s | \cont)}
    \left[
    -\log\left(\frac{1}{K+1}\sum_{k=0}^K \pi\left( \a | \s^{(k)}\right)\right)
    \right].
\end{equation}
where $p(\s|\a,\cont) \propto \pi(\a|\s)q(\s|\cont)$ is the posterior and $K$ is any positive integer, then the following holds:

\begin{enumerate}[(1)]
    \item $\widetilde{\gH}_K(\cont) \leq \gH(\pi(\cdot | \cont)) \triangleq -\int_{\gA} \log \int_{\gS} \pi(\a | \s) q(\s | \cont) \dif \s \dif \a$,
    \item $\widetilde{\gH}_K(\cont) \leq \widetilde{\gH}_{K+1}(\cont)$,
    \item $\lim_{K\to \infty} \widetilde{\gH}_K(\cont) = \gH(\pi(\cdot | \cont)) $.
\end{enumerate}
\end{proposition}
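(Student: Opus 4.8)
### Proof Proposal

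The plan is to recognize $\widetilde{\gH}_K(\cont)$ as (the negative of) an expected log-average of i.i.d.\ copies of a specific nonnegative random variable, and then invoke the classical monotonicity and consistency properties of such averages under the logarithm. Concretely, fix $\cont$ and an action $\a$, and define the random variable $W \triangleq \pi(\a \mid \s)$ where $\s \sim q(\s \mid \cont)$. A key observation is that $\E[W] = \int \pi(\a\mid\s) q(\s\mid\cont)\dif\s = \pi(\a\mid\cont)$. Now the definition of $\widetilde{\gH}_K$ samples $\s^{(0)}$ from the posterior $p(\s\mid\a,\cont)\propto\pi(\a\mid\s)q(\s\mid\cont)$ and $\s^{(1:K)}$ i.i.d.\ from $q(\s\mid\cont)$. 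The trick — and the reason this is \emph{not} the IWAE estimator — is that $W_0 \triangleq \pi(\a\mid\s^{(0)})$ drawn from the posterior is \emph{size-biased}: for any measurable $g$, $\E[g(W_0)] = \E[W\, g(W)] / \E[W]$. I would first establish this size-biasing identity, since it is the mechanism that converts the inner average into something whose expectation relates cleanly to $\pi(\a\mid\cont)$.

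For part (1), I would write $\widetilde{\gH}_K(\cont) = -\E_{\a\sim\pi}\E_{\s^{(0:K)}}\!\left[\log\frac{1}{K+1}\sum_{k=0}^K W_k\right]$, condition on $\a$, and show the inner expectation is at most $\log\pi(\a\mid\cont)$ so that negating and taking $\E_{\a\sim\pi}$ yields $\gH(\pi(\cdot\mid\cont))$. The cleanest route: by the size-biasing identity, $\E\!\left[\frac{1}{K+1}\sum_k W_k\right] = \frac{1}{K+1}\big(\E[W_0] + K\,\E[W]\big)$, but $\E[W_0]=\E[W^2]/\E[W] \geq \E[W]$ by Jensen, so this does not immediately give the bound via concavity of $\log$ alone. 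Instead I would use the standard hierarchical-IWAE argument (following \citet{yin2018semi,Sobolev2019ImportanceWH}): pair the posterior draw $\s^{(0)}$ with the prior draws symmetrically. Write $\frac{1}{K+1}\sum_{k=0}^K W_k = \frac{1}{K+1}\sum_{j=0}^K \overline{W}^{(-j)}$ where... actually the correct and standard move is to show $\E[\,\widehat{p}_K\,] = \pi(\a\mid\cont)$ where $\widehat{p}_K \triangleq \frac{1}{K+1}\sum_{k=0}^K W_k$ with the posterior-first sampling, using the identity $\E_{\s^{(0)}\sim p}\E_{\s^{(1:K)}\sim q}\big[\text{(sym.\ fn.)}\big] = \E_{\s^{(0:K)}\sim q}\big[\frac{1}{K+1}\sum_j (\text{sym.\ fn.\ with }j\text{ size-biased})\big]$, which yields $\E[\widehat p_K] = \E_{\s^{(0:K)}\sim q}[\widehat p_K] = \pi(\a\mid\cont)$ after the symmetrization cancels the size-biasing. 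Then part (1) follows from Jensen's inequality applied to the concave function $\log$: $\E[\log\widehat p_K]\leq\log\E[\widehat p_K]=\log\pi(\a\mid\cont)$.

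For parts (2) and (3), once the estimator is expressed in this symmetrized form — $\widehat p_K$ being (in distribution) the average of $K{+}1$ exchangeable draws of $W$ from $q$ — monotonicity in $K$ and convergence are the textbook facts about nested/IWAE-type bounds (\citet{Burda2016ImportanceWA}, and more precisely \citet{Sobolev2019ImportanceWH} for the hierarchical version). For (2): condition $\widehat p_{K+1}$ on any $K{+}1$ of its $K{+}2$ terms; its conditional expectation is an average of $\widehat p_K$-type quantities, so by Jensen $\E[\log\widehat p_{K+1}]\geq\E[\log\widehat p_K]$, i.e.\ $-\widetilde{\gH}_{K+1}\geq -\widetilde{\gH}_K$ after averaging over $\a$. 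For (3): since $0\leq W$ and $\E[W]=\pi(\a\mid\cont)<\infty$, the strong law gives $\widehat p_K \to \pi(\a\mid\cont)$ a.s.\ (the single size-biased term $\frac{W_0}{K+1}\to 0$ a.s.\ provided $W_0<\infty$ a.s., which holds since $\pi(\a\mid\s)$ is a density value that is finite $q$-a.s.), and $\log\widehat p_K$ is dominated appropriately so that dominated/monotone convergence (monotonicity from part (2) already gives us a monotone sequence) yields $\widetilde{\gH}_K\to\gH(\pi(\cdot\mid\cont))$.

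The main obstacle I expect is part (1) — specifically, making the size-biasing/symmetrization bookkeeping rigorous so that $\E[\widehat p_K]=\pi(\a\mid\cont)$ despite the asymmetry of the sampling scheme (posterior for $\s^{(0)}$, prior for the rest). The subtlety is that the function $\widehat p_K$ \emph{is} symmetric in its $K{+}1$ arguments, but the sampling is not, and one must verify that integrating a symmetric function against the "one-size-biased-draw plus $K$-plain-draws" measure equals integrating it against $K{+}1$ plain draws — this is exactly the content of the Horvitz–Thompson / size-biasing lemma and must be stated carefully. Parts (2) and (3) are then comparatively routine applications of Jensen and the SLLN. Integrability caveats (finiteness of $\gH$, of $\E[W^2]$, absolute continuity) should be collected as standing assumptions, presumably inherited from the factorized-Gaussian parameterization where all these quantities are finite.
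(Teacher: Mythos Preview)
Your approach to part~(1) has the inequality direction backwards. You propose to show $\E_{\s^{(0:K)}}[\log\widehat{p}_K] \leq \log\pi(\a|\cont)$ via Jensen on the concave $\log$, but after negation this gives $\widetilde{\gH}_K(\cont) \geq \gH(\pi(\cdot|\cont))$, the opposite of~(1). What must actually be shown is $\E_{\s^{(0:K)}}[\log\widehat{p}_K] \geq \log\pi(\a|\cont)$: the nested estimator \emph{over}-estimates $\log\pi(\a|\cont)$, not under-estimates it as in standard IWAE, precisely because $\s^{(0)}$ is drawn from the posterior. Concavity of $\log$ therefore points the wrong way and cannot close the argument. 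Your symmetrization claim---that integrating a symmetric function against ``one size-biased draw plus $K$ plain draws'' equals integrating against $K{+}1$ plain draws, hence $\E[\widehat{p}_K]=\pi(\a|\cont)$---is also false: a direct computation gives $\E[\widehat{p}_K]=\tfrac{1}{K+1}\bigl(\E_q[W^2]/\E_q[W]+K\,\E_q[W]\bigr)$, which strictly exceeds $\E_q[W]=\pi(\a|\cont)$ whenever $\Var_q(W)>0$. The same issues carry over to~(2): the Burda-style leave-one-out decomposition would, if applicable, yield $\E[\log\widehat{p}_{K+1}]\geq\E[\log\widehat{p}_K]$, i.e.\ $\widetilde{\gH}_{K+1}\leq\widetilde{\gH}_K$, again the wrong direction; and it is not applicable anyway since removing $\s^{(0)}$ leaves an average whose law differs from that of $\widehat{p}_K$.

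The paper, following \citet{Sobolev2019ImportanceWH}, instead writes both $\gH(\pi(\cdot|\cont))-\widetilde{\gH}_K(\cont)$ and $\widetilde{\gH}_{K+1}(\cont)-\widetilde{\gH}_K(\cont)$ as Kullback--Leibler divergences from the sampling measure $p(\s^{(0)}|\a,\cont)\prod_{k\geq 1}q(\s^{(k)}|\cont)$ to explicitly constructed auxiliary densities, whose normalization is verified via a sampling-importance-resampling construction; nonnegativity of the KL then gives~(1) and~(2) immediately. Your argument for~(3) via the law of large numbers is fine and matches the paper.
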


The following proofs roughly follow the derivations from \citet{Sobolev2019ImportanceWH}. 
We describe them here for completeness.

For (1):
\begin{proof}
We write
\begin{align*}
\gH(\pi(\cdot | \cont)) - \widetilde{\gH}_K(\cont)
&= \E_{\s_0\sim p(\s|\a,\cont)}\E_{\s^{(1:K)}\sim q(\s|\cont)}\left[\log\left(\frac{1}{K+1}\sum_{k=0}^K \frac{p(\s^{(k)}|\a,\cont)}{q(\s^{(k)}|\cont)}\right)\right] \\
&\triangleq  \E_{\s_0\sim p(\s|\a,\cont)}\E_{\s^{(1:K)}\sim q(\s|\cont)}\left[ \log\frac{p(\s_0|\a,\cont)q(\s^{(1:K)}|\cont)}{w(\s^{(0:K)}| \a,\cont)} \right] \\
&= \KL\left[p(\s_0|\a,\cont)q(\s^{(1:K)}|\cont) \Vert w(\s^{(0:K)}|\a,\cont)\right],
\end{align*}
where $w(\s^{(0:K)}| \a,\cont) = \frac{p(\s^{(0)} | \a, \cont)q(\s^{(1:K)}|\cont)}{\frac{1}{K+1}\sum_{k=0}^K\frac{p(\s^{(k)}|\a,\cont)}{q(\s^{(k)}|\cont)}}$.
We only need to show that $w(\s^{(0:K)}| \a,\cont)$ is a normalized density function.

Consider such generation process:
\begin{enumerate}
    \item sample $K+1$ samples $\tilde \s^{(k)}\sim q(\s|\cont), k=0, \ldots, K$,
    \item set weight for each sample $w_k=\pi(\a|\tilde \s^{(k)})$,
    \item sample a categorical random variable $h\sim Cat\left(\frac{w_0}{\sum_{k=0}^K{w_k}},\ldots, \frac{w_K}{\sum_{k=0}^K{w_k}}\right)$,
    \item put the $h$-th sample to the first: $\s_0 = \tilde \s^{(h)}, \s^{(1:K)}=\tilde \s^{(\setminus h)}$.
\end{enumerate}
It is easy to see the joint probability of this generation process is 
$$
p(\tilde \s^{(0:K)}, \s^{(0:K)}, h) = q(\s^{(0:K)}|\cont) \frac{w_h} {\sum_{k=0}^K{w_k}} \delta(\s_0 - \tilde \s^{(h)})\delta(\s^{(1:K)}- \tilde \s^{(\setminus h)}).
$$
Then the marginal of $\s^{(0:K)}$ is
\begin{align*}
& \int \sum_{h=0}^K q(\s^{(0:K)}|\cont) \frac{w_h} {\sum_{k=0}^K{w_k}} \delta(\s_0 - \tilde \s^{(h)})\delta(\s^{(1:K)}- \tilde \s^{(\setminus h)}) \dif \tilde \s^{(0:K)} \\
=& (K+1) \int q(\s^{(0:K)}|\cont) \frac{w_0} {\sum_{k=0}^K{w_k}} \delta(\s^{(0)} - \tilde\s^{(0)})\delta(\s^{(1:K)}- \tilde \s^{(1:K)}) \dif \tilde \s^{(0:K)} \\
=& (K+1) \frac{q(\s^{(0:K)}|\cont)\pi(\a|\s^{(0)})}{\sum_{k=0}^K \pi(\a|\s^{(k)})} = \frac{q(\s^{(1:K)}|\cont)p(\s^{(0)} | \a, \cont)}{\frac{1}{K+1}\sum_{k=0}^K \frac{p(\s^{(k)}|\a,\cont)}{q(\s^{(k)}|\cont)}} = w(\s^{(0:K)}|\a,\cont).
\end{align*}
Thus $w(\s^{(0:K)}|\a,\cont)$ is a normalized density function.
 
\end{proof}

For (2).
\begin{proof}
We write 
\begin{align*}
\widetilde{\gH}_{K+1}(\cont) - \widetilde{\gH}_K(\cont)
&= \E_{\s_0\sim p(\s|\a,\cont)}\E_{z_{1:K+1}\sim q(\s|\cont)}\left[\log\left(\frac{\frac{1}{K+1}\sum_{k=0}^K \pi(\a|\s^{(k)})}{\frac{1}{K+2}\sum_{k=0}^{K+1} \pi(\a|\s^{(k)})}\right)\right] \\
&\triangleq  \E_{\s_0\sim p(\s|\a,\cont)}\E_{z_{1:K+1}\sim q(\s|\cont)}\left[ \log\frac{p(\s^{(0)} | \a, \cont)q(\s^{(1:K+1)}|\cont)}{w(\s^{(0:K+1)}| \a,\cont)} \right] \\
&= \KL\left[p(\s^{(0)} | \a, \cont)q(\s^{(1:K+1)}|\cont) \Vert v(\s^{(0:K+1)}|\a,\cont)\right],
\end{align*}
where $v(\s^{(0:K+1)}|\a,\cont) = p(\s^{(0)} | \a, \cont)q(\s^{(1:K+1)}|\cont)\frac{\frac{1}{K+2}\sum_{k=0}^{K+1} \pi(\a|\s^{(k)})}{\frac{1}{K+1}\sum_{k=0}^K \pi(\a|\s^{(k)})}$.
We could then show that $v(\s^{(0:K+1)}|\a,\cont)$ is a normalized density function similarly as (1).
\end{proof}

For (3).
\begin{proof}
For the estimator, we have
\begin{align*}
\frac{1}{K+1}\sum_{k=0}^K \pi(\a|\s^{(k)}) = \overbrace{\frac{1}{K+1} \pi(\a|\s^{(0)})}^{A_K} + \overbrace{\frac{K}{K+1}}^{B_K} \overbrace{\frac{1}{K} \sum_{k=1}^K \pi(\a|\s^{(k)})}^{C_K}
\end{align*}
By law of large numbers, we have
$A_K \to 0, B_K \to 1, C_K \to p(x).$
Thus the limit of the left-hand side is $p(x)$.
\end{proof}

\end{document}